\documentclass[accepted]{uai2026} 
                        
\usepackage[american]{babel}

\usepackage{natbib} 
\usepackage{mathtools} 
\usepackage{booktabs} 
\usepackage{tikz} 
\usepackage{amssymb}
\usepackage{kotex}
\usepackage{algorithm}
\usepackage{algpseudocode}
\usepackage{subcaption}
\usepackage{xcolor}
\usepackage[normalem]{ulem}

\usetikzlibrary{positioning}

\usepackage{amsthm}

\newtheorem{theorem}{Theorem}[section]
\newtheorem{lemma}[theorem]{Lemma}

\newtheorem{assumption}{Assumption}

\def\pa{\operatorname{Pa}}

\def\ch{\operatorname{Ch}}

\def\supp{\operatorname{supp}}
\newcommand{\skel}{\mathrm{skel}}

\makeatletter
\algnewcommand{\LineComment}[1]{\Statex \hskip\ALG@thistlm \(\triangleright\) #1}
\makeatother

\title{Relaxed Sparsest-Permutation Formulation for Causal Discovery at Scale}
\author[1]{Sunmin Oh}
\author[*,4,5]{Sang-Yun Oh}
\author[*,1,2,3]{Gunwoong Park}

\affil[1]{%
    Department of Statistics\\
    Seoul National University\\
    South Korea
}
\affil[2]{%
    Interdisciplinary Program in Artificial Intelligence\\
    Seoul National University\\
    South Korea
}
\affil[3]{%
    Institute for Data Innovation in Science\\
    Seoul National University\\
    South Korea
  }  
\affil[4]{%
    Department of Statistics and Applied Probability\\
    University of California Santa Barbara\\
    CA, USA
  }  
\affil[5]{%
    Scientific Data Division\\
    Lawrence Berkeley National Laboratory\\
    CA, USA
  }  
  
\begin{document}

\maketitle

\begin{abstract}
    Despite the growing availability of large datasets, causal structure learning remains computationally prohibitive at scale. We revisit sparsest-permutation learning for linear structural equation models and show that exact Cholesky factorization is unnecessary for structure recovery. This observation motivates a support-level relaxation that searches for sparse triangular factors over a precision-support screening graph. The relaxed formulation can be efficiently evaluated via masked zero-fill incomplete Cholesky factorization, enabling scalable comparison of candidate orderings. At the population level, we establish soundness for Markov equivalence class (MEC) recovery under no-cancellation and sparsest Markov representation assumptions, as well as robustness to ordering misspecification. Motivated by these guarantees, we introduce \textsc{SCOPE}, a sparse-Cholesky pipeline that provides a scalable implementation of the relaxed formulation. Experiments on synthetic and real datasets demonstrate that \textsc{SCOPE} matches the MEC recovery accuracy of substantially slower baselines, while achieving significantly reduced runtime and scaling to $10^4$ variables.
\end{abstract}

\vspace{-2mm}
\section{Introduction}
\label{sec:introduction}

Learning a directed acyclic graph (DAG) and its Markov equivalence class (MEC) from observational data is a central goal in causal discovery. In modern applications such as genomics, neuroscience, and socio-economic systems, this task increasingly arises at large scales involving tens of thousands of variables (e.g., \cite{linnarsson2018scRNA,trapnell2014scRNA}). At such scales, existing approaches face a fundamental tension between statistical exactness and computational scalability, as enforcing exact conditional independence or likelihood-based characterizations quickly becomes computationally prohibitive (e.g., \cite{verma1990equivalence,andersson1997mec,kalisch2007pc,chickering2002ges,colombo2012rfci}). 

\begin{figure}[t]
  \centering
  \includegraphics[width=0.96\linewidth]{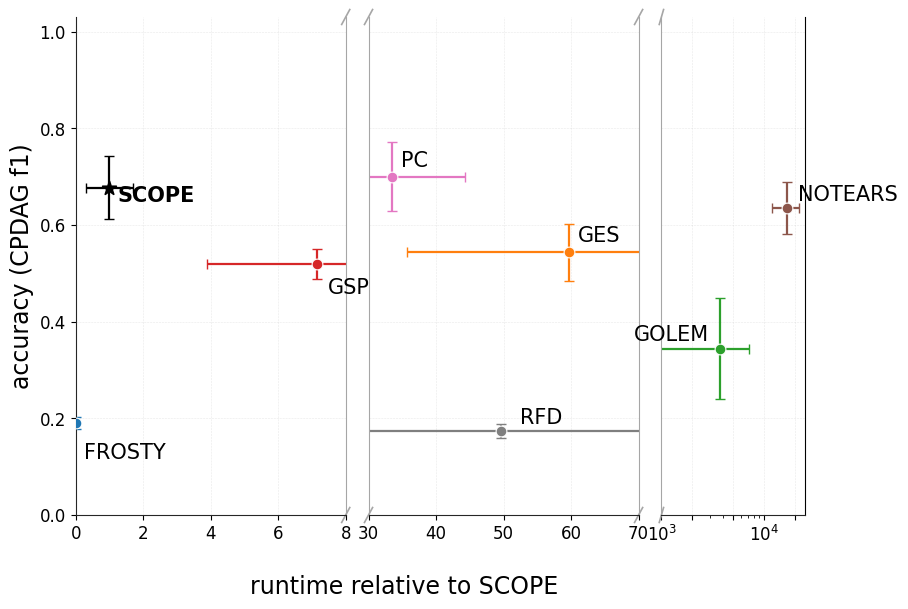}
  \vspace{-3mm}
  \caption{
  Runtime-accuracy tradeoffs for MEC recovery across baseline causal discovery methods (means $\pm$ error bars) on synthetic Gaussian linear SEMs with bounded in-degree $d=5$, $p=100$, and $n=20p$.
  }
  \label{fig:accuracy_runtime}
\end{figure} 

Figure~\ref{fig:accuracy_runtime} empirically illustrates this fundamental tension.
Representative baseline methods---PC~\citep{kalisch2007pc}, GES~\citep{chickering2002ges}, GSP~\citep{solus2021consistency}, NOTEARS~\citep{zheng2018notears}, GOLEM~\citep{ng2020golem}---enforce exact numerical formulations to achieve strong accuracy at substantially higher computational cost.
Related precision-based sparse-Cholesky pipelines, such as RFD~\citep{squires2020rfd} and FROSTY~\citep{bang2023frosty}, can be less accurate in these regimes.
Motivated by this observation, we focus on recovering the sparse structural support that is essential for causal discovery rather than insisting on exact numerical factorization. The proposed algorithm, \textsc{SCOPE}, attains strong accuracy with significantly lower runtime in Figure~\ref{fig:accuracy_runtime}.

\vspace{-2mm}
\subsection{Contributions}
\vspace{-1mm}

We revisit sparsest-permutation (SP) learning for linear structural equation model (SEM)~\citep{raskutti2018sp}. The SP approach searches over all possible orderings of variables to find the Cholesky factor $L_\pi$ of the precision matrix $\Omega_\pi$ \citep{pourahmadi1999} with the smallest number of nonzeros as shown below:
\[
\begin{aligned}
\text{Exact SP:}\quad
&\min_{\pi}\ \|L_\pi\|_0
\;\; \text{s.t.}\;\; 
\Omega_\pi = L_\pi L_\pi^\top,\\
\text{Ours:}\quad
&\min_{\pi}\ \|L_\pi\|_0
\;\; \text{s.t.}\;\; 
\Omega_\pi \circ M_\pi = (L_\pi L_\pi^\top) \circ M_\pi.
\end{aligned}
\]
As an alternative, we present a Relaxed Sparsest-Permutation (ReSP) formulation for causal structure learning that requires the coefficients to equal the precision matrix only at locations indicated by $M_\pi$.

In the remainder of this paper, we detail the theoretical and conceptual motivation for our ReSP formulation and its significant practical benefits. Our main contributions are:

\begin{itemize}

\item \textbf{ReSP: Relaxation of SP learning to precision support.}
We show that the exact Cholesky constraint $\Omega_\pi\!=\!L_\pi L_\pi^\top$ is not essential for structure recovery,
and introduce a support-level relaxation that searches for sparse triangular factors under a fixed mask.

\item \textbf{Scalable sparse-Cholesky pipeline for ReSP.}
We propose \textsc{SCOPE}, which evaluates the ReSP formulation over a candidate ordering set via masked zero-fill incomplete Cholesky factorization, followed by a lightweight refit-and-test pruning step and a sparsity-based selection step.

\item \textbf{Theory.}
We prove population-level soundness of the ReSP formulation for MEC recovery under no-cancellation and sparsest Markov representation assumptions, and establish a robustness result under ordering misspecification.
Since the analysis is formulated in terms of $\Omega$, it also extends directly to the covariance-equivalence class beyond Gaussianity.

\item \textbf{Scalability.}
Experiments on synthetic and real-world data show that \textsc{SCOPE} remains competitive while scaling to graphs with up to $p=10^4$ variables.

\end{itemize}
Detailed proofs are deferred to Appendix~\ref{app:proofs}.

\vspace{-1mm}
\section{Preliminaries}
\label{sec:preliminaries}
\vspace{-1mm}
\subsection{Linear Structural Equation Models}
\label{Sec:LSEM}

The model structure is represented by a DAG $G = (V,E)$, where
$V=\{1,\dots,p\}$ indexes variables and $E\subseteq V\times V$ is the set of directed edges.
We write $(j,k)$ for an edge from node $j$ to node $k$.
For a node $k\in V$, let $\pa(k):=\{j\in V: (j,k)\in E\}$ and $\ch(k):=\{j\in V: (k,j)\in E\}$ denote its parents and children. The moralized graph $G^m$ is the undirected graph obtained by connecting any two parents of a common child and then dropping all edge directions.

Let $\pi\!=\!(\pi_1,\dots,\pi_p)\!\in\!\mathbb S_p$ be an ordering, where $\pi_i$ denotes the variable index at position $i$.
We call $\pi$ a (not necessarily unique) topological ordering if $i\!<\!j$ whenever $(\pi_i,\pi_j)\in E$.
We view $\pi$ equivalently as a map $i\!\mapsto\!\pi(i)\!:=\!\pi_i$, and for any $S\!\subseteq[p]\times[p]$ define
$\pi(S)\!:=\!\{(\pi(i),\pi(j)):(i,j)\in S\}$.
For any $A\in\mathbb R^{p\times p}$, let $A_\pi\!:=\!P_\pi A P_\pi^\top$ denote $A$ indexed by ordering positions.
In particular, $\supp(A)\!=\!\pi(\supp(A_\pi))$.

A linear SEM associates $X\in\mathbb{R}^p$ with a DAG $G$ via
\begin{equation}
\label{eq:sem_model}
    X = B X + \epsilon,
\end{equation}
where $B\!=\![B_{kj}]$ is the weighted adjacency matrix satisfying $B_{kj}\!\neq\!0$ if and only if $(j,k)$ is an edge in $G$.
The noise vector $\epsilon\!=\!(\epsilon_1,\dots,\epsilon_p)^\top$ has independent components with $\mathbb{E}[\epsilon_j]\!=\!0$ and $\mathrm{Var}(\epsilon_j)\!=\!\sigma_j^2$ for all $j\!\in\![p]$, so that $\Sigma_\epsilon=\mathrm{diag}(\sigma_1^2,\dots,\sigma_p^2)$.
The covariance of $X$ is
\[
\Sigma=(I_p-B)^{-1}\Sigma_{\epsilon}(I_p-B)^{-\top},
\]
and we denote the population precision matrix by $\Omega=\Sigma^{-1}$.

The MEC consists of all DAGs that entail the same set of conditional independences (CI)~\citep{spirtes2000cps,pearl2009causality}.
For a DAG $G$, let $\mathcal{M}(G)$ denote its MEC.
We represent $\mathcal{M}(G)$ by its CPDAG, where an edge is directed if and only if its orientation is invariant across all DAGs in $\mathcal{M}(G)$.

Under the Gaussian linear SEM, CI relations are equivalent to vanishing partial correlations and hence are characterized by the population precision matrix $\Omega$~\citep{lauritzen1996graphical}.
Equivalently, one may view the induced equivalence through the second-order structure encoded by $\Omega$, referred to as the covariance-equivalence class (CEC)~\citep{spirtes1998path,pearl2009causality}.
Since our algorithm and theory are formulated in terms of $\Omega$, the same arguments naturally extend beyond Gaussianity to the corresponding $\Omega$-based equivalence.

Throughout the paper, we consider i.i.d.\ samples from the linear SEM~\eqref{eq:sem_model} with independent Gaussian errors and $\Sigma\succ 0$, so that the population precision $\Omega\!=\!\Sigma^{-1}$ is well-defined.
We focus on regimes with small in-degree so that parent sets remain sparse, while allowing the moralized degree to be large due to hub nodes and moralization.
This regime is consistent with networks observed in genomics, neuroscience, and socio-economic systems, which often exhibit sparse local dependencies with prominent hubs and substantial degree heterogeneity~\cite{barabasi_network_2004,jeong2000metabolic,bullmore2009complex,newman2003structure}.

\vspace{-2mm}
\subsection{Sparsest-Permutation Approaches}
\label{subsec:existing}
\vspace{-1mm}

We briefly formalize sparsest-permutation (SP) learning for Gaussian linear SEMs and summarize representative algorithmic instantiations. In linear SEMs, SP learning formulates structure discovery as a variable ordering problem. Across existing approaches, a common objective is
\begin{equation}
\label{eq:sp_objective}
\min_{\pi \in \mathbb{S}_p} \ \|L_\pi\|_0
\quad \text{s.t.} \quad
\Omega_\pi = L_\pi L_\pi^\top,
\end{equation}
where $L_\pi$ is the lower-triangular Cholesky factor of $\Omega_\pi\!:=\!P_\pi \Omega P_\pi^\top$.
The off-diagonal support of $L_\pi$ induces a directed edge set under $\pi$.
Specifically, define
\[
\widetilde E(\pi,L_\pi)
:=
\{\,(\pi(j),\pi(k)):\ k>j,\ (L_\pi)_{kj}\neq 0\,\},
\]
and let $G_\pi=(V,\widetilde E(\pi,L_\pi))$ be the induced DAG.
By construction, the off-diagonal nonzeros of $L_\pi$ are in one-to-one correspondence with $\widetilde E(\pi,L_\pi)$, so
\vspace{-2mm}
\[
\|L_\pi\|_0 \;=\; p + |\widetilde E(\pi,L_\pi)|.
\]

\vspace{-2mm}
Identifiability for MEC recovery can be obtained under sparsest Markov representation (SMR) assumption, which is typically weaker than faithfulness; see~\citet{raskutti2018sp}.
The distinction among SP-based methods lies in how the sparsity score $\|L_\pi\|_0$ is evaluated or approximated.

\textbf{Exact SP (CI-based).}
The original SP objective treats zero identification as a CI problem and enforces sparsity entrywise~\citep{raskutti2018sp}:
\vspace{-1mm}
\[
\text{Exact SP:}\quad
\begin{cases}
\displaystyle \min_{\pi \in \mathbb{S}_p} \ \|L_\pi\|_0,\\[4pt]
L_{\pi,ij} = 0
\;\Longleftrightarrow\;
X_i \perp X_j \mid X_{S_{ij}(\pi)} .
\end{cases}
\]

\vspace{-3mm}
Equivalently, these CI statements can be assessed via partial correlations by testing whether the corresponding entry of $\Omega$ vanishes.
However, this formulation conditions on ordering-dependent sets $S_{ij}(\pi)$, that is, subsets of variables preceding $j$ under $\pi$.
Because these sets can grow linearly with $p$, the number of required CI tests is exponential in the worst case.

\textbf{Greedy SP variants.}
Greedy variants do not directly solve the global optimization problem over the full permutation space $\mathbb{S}_p$. Instead, they perform a local search over permutations using simple neighborhood moves. Starting from an initial ordering, these methods iteratively update it within a local neighborhood:
\vspace{-1mm}
\[
\text{Greedy SP:}\hspace{-1mm}
\begin{cases}
\pi^{(0)} \ \text{initialized},\\[4pt]
\pi^{(t+1)} \in \mathcal{N}(\pi^{(t)}) 
\ \text{s.t.}\ 
\mathcal{S}(G_{\pi^{(t+1)}}) < \mathcal{S}(G_{\pi^{(t)}}),\\[6pt]
\text{stop at } \pi^\star \text{ if }
\mathcal{S}(G_{\pi'}) > \mathcal{S}(G_{\pi^\star})
\ \forall \pi' \in \mathcal{N}(\pi^\star).
\end{cases}
\]

\vspace{-3mm}
Here, $\mathcal{N}(\pi)\subset\mathbb{S}_p$ denotes a local neighborhood of orderings
(e.g., adjacent swaps~\cite{teyssier2005ordering} or tucks~\cite{lam2022grasp}), and the score $\mathcal{S}$ may be chosen as an explicit sparsity proxy (e.g., $|\widetilde E(\pi,L_\pi)|$) or as a likelihood-based score with a complexity penalty
(e.g., edge-count penalties or BIC/BDeu-type criteria), so that lower-score moves tend to favor sparser induced graphs.
However, greedy SP methods still compare orderings through repeated score evaluations, rather than by explicitly
identifying individual zero entries in $L_\pi$.
Representative examples include GSP~\cite{solus2021consistency}, GRaSP~\cite{lam2022grasp}, and ARCS~\cite{ye2020optimizing}. 

\textbf{Limitations.}
Both exhaustive and greedy variants use the sparsity pattern of $L_\pi$ to compare candidate orderings, rather than as a key computational component for certifying zeros in $L_\pi$.
Evaluating each candidate still entails many ordering-dependent statistical queries, either CI tests (Exact SP) or repeated likelihood-based scoring with complexity penalties (greedy SP).
Consequently, even when the ordering search is restricted to local neighborhoods of $\mathbb{S}_p$, the overall runtime need not decrease proportionally in large-scale settings.

\vspace{-2mm}
\subsection{Sparse Cholesky vs.\ SP objectives}
\label{Sec:Limitation}

Sparse Cholesky factorization computes a lower-triangular factor $L$ satisfying $A=LL^\top$ for a sparse symmetric positive definite matrix $A$~\citep{davis2006direct,saad2003iterative}.
Its primary goal is to choose an ordering $\pi$ so that factoring the permuted matrix $A_\pi:=P_\pi A P_\pi^\top$ introduces as little \emph{fill-in} as possible, i.e., nonzeros created during elimination beyond the original sparsity pattern of $A_\pi$.

Fill-reducing approaches are typically formulated via an undirected elimination-graph objective, which captures the sparsity pattern induced during Cholesky elimination. With $A_\pi=L_\pi L_\pi^\top$, we define the fill set under $\pi$ as
\[
\mathcal{F}(\pi)
=
\supp(L_\pi+L_\pi^\top)\setminus\supp(A_\pi).
\]
To connect with SP learning, we specialize to $A\!=\!\Omega$, the population precision matrix. For Gaussian linear SEMs, SP learning can be viewed as an ordering search over $\pi\in\mathbb{S}_p$ for a sparse Cholesky factor of $\Omega_\pi$~\cite{raskutti2018sp}.

\textbf{Objective comparison.}
Although both sparse Cholesky ordering and SP learning are phrased through a triangular factor, they optimize different criteria:
\[
\text{Sparse-Chol:}\ \min_{\pi \in \mathbb{S}_p} |\mathcal{F}(\pi)|
\quad\text{vs.}\quad
\text{SP:}\ \min_{\pi \in \mathbb{S}_p} \|L_\pi\|_0.
\]
This mismatch arises because fill reduction is defined on an \emph{undirected} elimination graph, whereas $\|L_\pi\|_0$ reflects \emph{directed} sparsity from conditional independencies. Figure~\ref{fig:support_nesting} illustrates the difference in skeletons obtained under fill-reducing Cholesky orderings and under the SP criterion differ relative to the undirected edge set $\supp(\Omega)$.

\begin{figure}[t]
\centering
\resizebox{0.95\linewidth}{!}{%
\begin{tikzpicture}[font=\Large]

\definecolor{outerGray}{RGB}{242,243,245}
\definecolor{midBlue}{RGB}{224,235,245}
\definecolor{innerSand}{RGB}{205,216,182}
\definecolor{ink}{RGB}{35,35,35}

\node[draw=ink, thick, rounded corners=6pt,
      fill=outerGray,
      minimum width=12cm,
      minimum height=4.5cm] (outer) {};

\node[anchor=north west] at ([xshift=8pt,yshift=-6pt]outer.north west)
{$\pi(\supp(L_\pi + L_\pi^\top))$};

\node[anchor=north east] at ([xshift=-8pt,yshift=-6pt]outer.north east)
{\Large S1};

\node[draw=ink, thick, rounded corners=6pt,
      fill=midBlue,
      minimum width=10.5cm,
      minimum height=3.0cm,
      anchor=north] (mid)
      at ([yshift=-1.2cm]outer.north) {};

\node[anchor=north west] at ([xshift=10pt,yshift=-7pt]mid.north west)
{$\supp(\Omega)$};

\node[anchor=south east] at ([xshift=-8pt,yshift=63pt]mid.south east)
{\Large S2};

\node[draw=ink, thick, rounded corners=6pt,
      fill=innerSand,
      minimum width=6.2cm,
      minimum height=1.6cm] (inner)
      at ([yshift=-0.3cm]mid.center) {};

\node at (inner)
{$\pi_0(\supp(L_{\pi_0} + L_{\pi_0}^\top))$};

\end{tikzpicture}}
\vspace{-2mm}
\caption{Nested patterns induced by an ordering: S1 fill beyond $\supp(\Omega)$, and S2 more attainable sparsity under $\pi_0$.}
\label{fig:support_nesting}
\end{figure}

Sparse Cholesky factorization methods are designed to reduce the set difference S1 in Figure~\ref{fig:support_nesting} when computing an ordering $\pi$ and the corresponding factor $L_\pi$. For a given ordering $\pi$, the symmetrized triangular support $\pi(\supp(L_\pi\!+\!L_\pi^\top))$ may extend beyond $\supp(\Omega)$ which corresponds to \emph{fill-ins} in sparse linear algebra communities.

SP learning imposes a stricter sparsity objective. Even after excluding the set difference~S1, different orderings can induce different triangular sparsity patterns within $\supp(\Omega)$. The set difference~S2 highlights additional nonzeros that vanish under a reverse topological ordering $\pi_0$, associated with moralized edges arising from v-structures under suitable regularity conditions. 
Accordingly, SP learning accounts for sparsity both outside and within $\supp(\Omega)$.

An ordering with $|\mathcal{F}(\pi)|\!=\!0$ can still yield a denser $L_\pi$ than another ordering that incurs a small number of fill edges in the undirected elimination graph; see Appendix~\ref{app:counterexample}. Accordingly, fill reduction alone is insufficient to control the sparsity of $L_\pi$. In other words, an incorrect variable ordering results in a non-decreasing number of undesirable fill edges S1, while identifying the vanishing moralization edges S2 requires the correct ordering—an intrinsically combinatorial search over orderings. Moreover, even given a correct ordering, finite-sample estimation noise can obscure these vanishing edges and make them hard to detect reliably, rendering standard SP learning both computationally and statistically formidable in practice.

\vspace{-2mm}
\section{Relaxed Sparsest-Permutation Formulation}
\label{Sec:IC}
\vspace{-1mm}

This section introduces a Relaxed SP (ReSP) formulation for causal structure learning and establishes soundness and robustness for MEC recovery at the population level:
\vspace{-1mm}
\begin{equation}
\begin{aligned}
\label{eq:new_objective}
\min_{\pi \in \mathbb{T}_{p}} \ \|L_\pi\|_0
\;\; \text{s.t.}\;
\begin{cases}
(\Omega_\pi - L_\pi L_\pi^\top)\circ M_\pi=\mathbf{0},\\
L_\pi \circ (\mathbf{1}-M_\pi)=\mathbf{0}.\\    
\end{cases}
\end{aligned}
\end{equation}

\vspace{-3mm}
Here, $M$ denotes the precision-support mask, defined by
\[
M := \mathbf{1}\{\Omega\neq 0\}\in\{0,1\}^{p\times p}, \quad
M_{ij}= \mathbf{1}\{\Omega_{ij}\neq 0\}.
\]
For each ordering $\pi$, $\Omega_\pi:=P_\pi\Omega P_\pi^\top$, and $M_\pi:=P_\pi M P_\pi^\top$.
The set $\mathbb{T}_{p}$ collects candidate orderings, for example from prior knowledge or a user-specified ordering heuristic.
The mask $M$ yields an ordering-agnostic screening graph, restricting feasibility to $\supp(\Omega)$ across all permutations. It suppresses the set difference S1 in Figure~\ref{fig:support_nesting} by forbidding any fill outside $\supp(\Omega)$, while the formulation compares orderings by the sparsity of $L_\pi$ within the precision support.

We evaluate~\eqref{eq:new_objective} via masked zero-fill incomplete Cholesky factorization, often denoted as IC(0) in the literature~\cite{lin1999incomplete,meijerink1977ic,saad2003iterative}.
For each $\pi$ we compute the IC(0) factor $L_\pi=\mathrm{IC0}(\Omega_\pi;M_\pi)$, which satisfies
\begin{equation}
\label{eq:masked_ic0_constraints}
(\Omega_\pi-L_\pi L_\pi^\top)\circ M_\pi=\mathbf 0,
\quad
L_\pi\circ(\mathbf 1-M_\pi)=\mathbf 0.
\end{equation}
Our focus is the support of $L_\pi$, not numerical approximation of $\Omega_\pi$ by $L_\pi L_\pi^\top$.

For linear SEMs, the precision entries combine direct and shared-child effects.
To make this explicit, index variables so that $B$ is strictly lower triangular.
Then, for any $j<k$, the population precision matrix admits the decomposition
\[
\Omega_{kj}
=
-\sigma_k^{-2}B_{kj}
+\sum_{\ell>k}\sigma_\ell^{-2}B_{\ell j}B_{\ell k}.
\]
The first term corresponds to the directed edge $(j,k)$, while the summation aggregates contributions from shared children of $j$ and $k$. Consequently, absent exact algebraic cancellations, any adjacency in the moralized graph $G_0^m$ yields $\Omega_{ij}\!\neq\!0$. This motivates the following no-cancellation condition, under which $\supp(\Omega)$ provides a valid ordering-agnostic screening mask.

\begin{assumption}[No-cancellation {\cite{loh2014icov}}]
\label{assm:nocancel}
Assume the linear SEM~\eqref{eq:sem_model} with independent noise, and without loss of generality index variables so that $B$ is strictly lower triangular. For all $j<k$,
\[
\Omega_{kj}=0
\ \Rightarrow\
B_{kj}=0
\ \text{ and }\
B_{\ell j}B_{\ell k}=0,\ \forall\,\ell>k.
\]
\end{assumption}

\begin{lemma}[Mask validity under no-cancellation]
\label{lem:mask_validity}
Assume Assumption~\ref{assm:nocancel}. Then for the true DAG $G_0$ and any $\pi\in\mathbb{S}_p$,
\[
\pi\bigl(\skel(G_0)\bigr)\subseteq \supp(\Omega_\pi)=\pi(\skel(G_0^m)).
\]
\end{lemma}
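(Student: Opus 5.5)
Since $\pi(\cdot)$ is a bijection of $[p]\times[p]$ and $\Omega_\pi = P_\pi\Omega P_\pi^\top$ gives $\supp(\Omega)=\pi(\supp(\Omega_\pi))$, I first reduce the claim to the ordering-free statement
\[
\skel(G_0)\subseteq \supp(\Omega)=\skel(G_0^m),
\]
reading each support as an undirected off-diagonal edge set. The inclusion $\skel(G_0)\subseteq\skel(G_0^m)$ holds by definition of moralization, since moralization only adds edges. The content of the lemma is therefore the equality $\supp(\Omega)=\skel(G_0^m)$. I prove it by double inclusion, working in the indexing where $B$ is strictly lower triangular, as in Assumption~\ref{assm:nocancel}.

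\textbf{Step 1: $\supp(\Omega)\subseteq\skel(G_0^m)$.} I expand $\Omega=(I-B)^\top\Sigma_\epsilon^{-1}(I-B)$ entrywise. This yields the displayed decomposition: for $j<k$,
\[
\Omega_{kj}=-\sigma_k^{-2}B_{kj}+\sum_{\ell>k}\sigma_\ell^{-2}B_{\ell j}B_{\ell k}.
\]
The cross term from $\ell=j$ vanishes because $B_{jk}=0$ when $j<k$, and I will state this explicitly. Suppose $\Omega_{kj}\neq 0$. Then either $B_{kj}\neq0$, so $(j,k)\in E$, or some $\ell>k$ has $B_{\ell j}B_{\ell k}\neq 0$, so $j$ and $k$ are co-parents of $\ell$. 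In either case $\{j,k\}$ is an edge of $G_0^m$. This direction needs no assumption.

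\textbf{Step 2: $\skel(G_0^m)\subseteq\supp(\Omega)$.} Take $\{j,k\}\in\skel(G_0^m)$ with $j<k$. Then either $B_{kj}\neq 0$, or there is a common child $\ell$ with $B_{\ell j}B_{\ell k}\neq 0$. Note that $\ell>k$ automatically, by lower-triangularity. The reverse edge $B_{jk}$ cannot occur. Assumption~\ref{assm:nocancel} says that $\Omega_{kj}=0$ forces all of these quantities to vanish. Taking its contrapositive gives $\Omega_{kj}\neq0$.

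Symmetry of $\Omega$ and of the moral skeleton handles the case $j>k$. Diagonal entries are excluded throughout, or equivalently included on both sides, since $\Omega_{jj}\ge\sigma_j^{-2}>0$. Finally, I transport the identity back through $\pi$ to obtain the stated form. The only delicate points are bookkeeping ones: fixing the convention that the supports are compared as undirected off-diagonal edge sets, and checking that the $\ell=j$ cross term really drops out. I expect no substantive obstacle.
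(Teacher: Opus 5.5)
Your proof is correct, and its outer structure matches the paper's: reduce to the ordering-free identity, use $\skel(G_0)\subseteq\skel(G_0^m)$, then relabel. The difference is in the key step $\supp(\Omega)=\skel(G_0^m)$. The paper does not prove it; it imports it from Theorem~2 of \citet{loh2014icov}. You derive it directly from $\Omega=(I-B)^\top\Sigma_\epsilon^{-1}(I-B)$ by double inclusion.

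Your route is self-contained and more informative. It shows that $\supp(\Omega)\subseteq\skel(G_0^m)$ holds for every linear SEM with independent noise, with no assumption at all. It also shows that the reverse inclusion is exactly the contrapositive of Assumption~\ref{assm:nocancel}. That follows once you note two facts: $B_{jk}=0$ for $j<k$, and any common child $\ell$ of $j<k$ must satisfy $\ell>k$. So the assumption is precisely what the lemma needs and nothing more. The paper's citation is shorter and ties the lemma to the existing literature. However, it hides how thin this step is, given that the paper already displays the same decomposition in the main text.

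One bookkeeping remark on your final transport step. Under the convention $\supp(\Omega)=\pi(\supp(\Omega_\pi))$ that you (and the paper) adopt, transporting back gives $\supp(\Omega_\pi)=\pi^{-1}(\skel(G_0^m))$, not $\pi(\skel(G_0^m))$. The lemma's $\pi(\cdot)$ therefore has to be read loosely as ``the same set expressed in position coordinates.'' The paper's own proof makes the same slip when it writes $\supp(\Omega_\pi)=\pi(\supp(\Omega))$. This does not affect the substance of either argument.
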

\vspace{-2mm}

Lemma~\ref{lem:mask_validity} shows that, uniformly over all orderings, precision-support mask restricts admissible nonzeros without removing true parent-child adjacencies, and rules out the set difference S1 (fill beyond $\supp(\Omega)$). Under misspecified orderings, any induced candidates are confined to moralized edges. Hence, precision-support masking offers a scalable mechanism for suppressing false discoveries and is well suited to large graphs. 

ReSP may admit multiple minimizers, reflecting the fact that an equivalence class can contain multiple DAGs. We therefore impose an SMR strictness condition relative to $\mathcal{M}(G_0)$.
For the analysis, we restrict our attention to candidate set $\mathbb{T}_p$ on which the masked IC(0) construction is well-posed. Specifically, for every $\pi\in\mathbb{T}_p$, IC(0) does not break down and introduces no fill outside the precision-support mask:
\begin{equation}
\label{eq:admissible}
    (L_\pi L_\pi^\top)\circ(\mathbf 1-M_\pi)=\mathbf 0,
    \quad \forall\,\pi\in\mathbb{T}_p.
\end{equation}
We call such $\mathbb{T}_p$ IC(0)-admissible.

\begin{assumption}[SMR {\cite{raskutti2018sp}}]
\label{assm:SMR2}
Let $\pi_0$ be a reverse topological ordering of $G_0$. For each $\pi\in\mathbb{S}_p$, define $L_\pi$ via $\Omega_\pi=L_\pi L_\pi^\top$ and let $G_\pi$ be the induced DAG with edge set $\widetilde E(\pi,L_\pi)$. Assume that
\[
G_\pi\notin \mathcal{M}(G_0)
\quad\Rightarrow\quad
\|L_\pi\|_0>\|L_{\pi_0}\|_0,
\quad \forall\,\pi\in\mathbb{S}_p.
\]
\end{assumption}

\begin{theorem}[Guarantees of the ReSP formulation]
\label{thm:guarantees}
Consider a Gaussian linear SEM~\eqref{eq:sem_model} with Assumption~\ref{assm:nocancel}, and fix the precision-support mask.
\vspace{-2mm}
\begin{itemize}
\item \textbf{(Soundness).}
Suppose $\mathbb{T}_p$ is IC(0)-admissible and contains a reverse topological ordering $\pi_0$ of $G_0$, and that Assumption~\ref{assm:SMR2} holds.
Then every minimizer $(\hat\pi,L_{\hat\pi})$ of~\eqref{eq:new_objective} satisfies
\vspace{-5mm}
\[
G_{\hat\pi}\in\mathcal{M}(G_0).
\]
\vspace{-6mm}
\item \textbf{(Robustness).}
For any choice of $\mathbb{T}_p$ and any $\pi\in\mathbb{T}_p$,
every IC(0) output $L_\pi$ for~\eqref{eq:masked_ic0_constraints} satisfies
\vspace{-1mm}
\[
\widetilde E(\pi,L_\pi)\subseteq \skel(G_0^m).
\]

\vspace{-1mm}
Hence, under ordering misspecification, feasibility under the mask prevents any spillover beyond $\supp(\Omega)$.
\end{itemize}
\end{theorem}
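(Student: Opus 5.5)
Robustness is the easier part, so I would prove it first. By the second constraint in \eqref{eq:masked_ic0_constraints}, $L_\pi\circ(\mathbf 1-M_\pi)=\mathbf 0$. Hence every off-diagonal nonzero $(L_\pi)_{kj}$ with $k>j$ lies in $\supp(\Omega_\pi)$. Mapping back through $\pi$ gives $(\pi(j),\pi(k))\in\supp(\Omega)$. Lemma~\ref{lem:mask_validity} then gives $\supp(\Omega_\pi)=\pi(\skel(G_0^m))$, so $\widetilde E(\pi,L_\pi)\subseteq\skel(G_0^m)$ as undirected pairs. This step needs no admissibility and no SMR; it uses only the mask constraint and no-cancellation.

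For soundness, the plan is to show that for every admissible $\pi\in\mathbb{T}_p$ the IC(0) factor coincides with the exact Cholesky factor of $\Omega_\pi$. The ReSP objective then equals the SP objective on $\mathbb{T}_p$, and SMR finishes the argument.

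\textbf{Step 1 (IC(0) equals exact Cholesky on admissible orderings).} Admissibility \eqref{eq:admissible} gives $(L_\pi L_\pi^\top)\circ(\mathbf 1-M_\pi)=\mathbf 0$. Trivially, $\Omega_\pi\circ(\mathbf 1-M_\pi)=\mathbf 0$, since $M_\pi$ is exactly the support of $\Omega_\pi$. Combining these with the first constraint in \eqref{eq:masked_ic0_constraints} yields $\Omega_\pi=L_\pi L_\pi^\top$ on all entries. IC(0) does not break down, so its diagonal entries are positive. By uniqueness of the Cholesky factor with positive diagonal, $L_\pi$ is the exact Cholesky factor of $\Omega_\pi\succ0$, and $G_\pi$ coincides with the DAG in Assumption~\ref{assm:SMR2}.

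\textbf{Step 2 (compare with the reverse topological ordering).} Since $\pi_0\in\mathbb{T}_p$, any minimizer $\hat\pi$ satisfies $\|L_{\hat\pi}\|_0\le\|L_{\pi_0}\|_0$, and by Step 1 both are exact Cholesky factors. Suppose $G_{\hat\pi}\notin\mathcal M(G_0)$. Assumption~\ref{assm:SMR2} would then give $\|L_{\hat\pi}\|_0>\|L_{\pi_0}\|_0$, a contradiction. Hence $G_{\hat\pi}\in\mathcal M(G_0)$.

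The main obstacle I expect is Step 1: making the identification of the IC(0) output with the true Cholesky factor rigorous. One must confirm that the IC(0) recursion defines a unique triangular $L_\pi$ with positive diagonal whenever it does not break down. One must also check that the paper's convention for $L_\pi$ (lower triangular with $\Omega_\pi=L_\pi L_\pi^\top$, as opposed to a reversed UDU-type convention) is the same in \eqref{eq:masked_ic0_constraints} and in Assumption~\ref{assm:SMR2}. If the conventions match, uniqueness of Cholesky closes the gap directly. If they do not, I would rewrite both sides in a common convention before invoking uniqueness.
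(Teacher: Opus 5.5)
Your proposal is correct and follows the paper's proof essentially step for step. Robustness comes from the support constraint $L_\pi\circ(\mathbf 1-M_\pi)=\mathbf 0$ together with $\supp(\Omega)=\skel(G_0^m)$, and soundness comes from splitting $\Omega_\pi-L_\pi L_\pi^\top$ over $M_\pi$ and $\mathbf 1-M_\pi$ to obtain exact factorization, followed by the SMR contradiction against $\pi_0$. Your extra appeal to uniqueness of the Cholesky factor, which the paper leaves implicit, cleanly identifies the IC(0) output with the $L_\pi$ of Assumption~\ref{assm:SMR2}, and the conventions you were worried about do match, since both use a lower-triangular $L_\pi$ with $\Omega_\pi=L_\pi L_\pi^\top$.
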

\vspace{-1mm}

Theorem~\ref{thm:guarantees} provides best and worst case theoretical guarantees of the ReSP formulation.
When an IC(0)-admissible candidate set contains a reverse ordering and the SMR condition holds, minimizing the objective recovers a DAG in the true MEC. When such an ordering is absent in $\mathbb{T}_p$, feasibility under the precision-support mask confines all induced edges to the moralized skeleton, ruling out false discoveries outside $\supp(\Omega)$. Consequently, small candidate sets $\mathbb{T}_p$ still provide a nontrivial accuracy floor, while preserving scalability through a reduced ordering search space.

\vspace{-2mm}
\section{SCOPE: A Heuristic Algorithm}
\vspace{-1mm}

We propose \textsc{SCOPE} (Sparse-ChOlesky PipelinE), a finite-sample procedure for optimizing the ReSP formulation~\eqref{eq:new_objective} over a candidate ordering set $\mathbb{T}_p$.
\textsc{SCOPE} has four stages:
0) precision estimation,
1) relaxed-feasible factor construction,
2) refit-and-test refinement,
and 3) sparsity-based selection.

\textsc{SCOPE} solves the finite-sample counterpart of
\eqref{eq:new_objective}:
\begin{equation}
\label{eq:scope_finite_objective}
\min_{\pi\in\mathbb{T}_p} \|\widetilde L_\pi\|_0
\;\text{s.t.}
\begin{cases}
(\widehat\Omega_\pi-\widehat L_\pi \widehat L_\pi^\top)\circ \widehat M_\pi=\mathbf 0,\\[2pt]
\widehat L_\pi\circ(\mathbf 1-\widehat M_\pi)=\mathbf 0,\\[2pt]
\widetilde L_\pi=\textsc{RefitTest}(X;\supp(\widehat L_\pi)).
\end{cases}
\end{equation}
Here, $\widehat{\Omega}_\pi := P_\pi \widehat{\Omega} P_\pi^\top$ and
$\widehat{M}_\pi := P_\pi \widehat{M} P_\pi^\top$, where $\widehat{M}$ is an estimated precision-support mask constructed from $\widehat{\Omega}$.
Algorithm~\ref{alg:scope} summarizes the procedure.

\textbf{Stage 0).}
The ReSP formulation in~\eqref{eq:scope_finite_objective} is expressed in terms of a precision matrix. Accordingly, we first compute a sparse estimator $\widehat{\Omega}$ from the data, whose support determines the admissible mask $\widehat{M}=\mathbf{1}\{\widehat{\Omega}\neq 0\}$.

In principle, any consistent estimator of $\Omega$ can be used within
\textsc{SCOPE}. However, the ReSP formulation depends highly on the zero pattern of the induced triangular factor, and accurate \emph{support recovery} is prioritized over precise numerical estimation at this stage. Hence, we recommend an estimation strategy that enables thresholded support recovery under high-dimensional scaling. In our implementation, we use the graphical lasso (GL) with a distributionally robust optimization (DRO)-calibrated regularization parameter at level $\alpha^{(0)}$~\citep{friedman2008glasso,cisnerosvelarde2020droglasso}. Appendix~\ref{app:sample} provides theoretical rationale for the resulting precision estimate $\widehat{\Omega}$.

\begin{algorithm}[t]
\caption{\textsc{SCOPE}: Sparse-Cholesky Pipeline}
\label{alg:scope}
\begin{algorithmic}[1]
\Require Data $X\in\mathbb{R}^{n\times p}$, levels $\alpha^{(0)}$, $\alpha^{(2)}$, orderings $\mathbb{T}_p$
\Ensure Estimated edge set $\widehat{E}$

\State $\widehat{\Omega} \gets \textsc{EstimatePrecision}(X;\alpha^{(0)})$
\State $\widehat{M} \gets \textsc{BuildMask}(\widehat{\Omega})$
\For{$\pi \in \mathbb{T}_p$}
    \State $\widehat{\Omega}_\pi \gets P_\pi \widehat{\Omega} P_\pi^\top$
    \State $\widehat{M}_\pi \gets P_\pi \widehat{M} P_\pi^\top$ 
    \LineComment{\emph{ReSP factor evaluation in~\eqref{eq:new_objective}:}}
    \State $\widehat{L}_\pi \gets \mathrm{IC0}(\widehat{\Omega}_\pi;\widehat{M}_\pi)$
    \LineComment{\emph{Refit-and-test pruning step:}}
    \State $\widetilde{L}_\pi \gets
    \textsc{RefitTest}(X;\supp(\widehat{L}_\pi),\alpha^{(2)})$
\EndFor
\State $\widehat{\pi}\in\arg\min_{\pi\in\mathbb{T}_p}\|\widetilde{L}_\pi\|_0$
\State \Return $\widehat{E}\gets\widetilde{E}(\widehat{\pi},\widetilde{L}_{\widehat{\pi}})$
\end{algorithmic}
\end{algorithm}

\textbf{Stage 1).}
For a fixed ordering $\pi$, Stage~1 computes a masked IC(0) factor
$\widehat{L}_\pi=\mathrm{IC0}(\widehat{\Omega}_\pi;\widehat{M}_\pi)$, a lower-triangular matrix satisfying the masked feasibility
constraints in~\eqref{eq:masked_ic0_constraints} with $(\Omega_\pi,M_\pi)$ replaced by $(\widehat{\Omega}_\pi,\widehat{M}_\pi)$.

\textbf{Stage 2).}
Stage~2 is a refit-and-test pruning step that mitigates estimation bias in $\widehat{\Omega}$ from Stage~0 and numerical errors accumulated during IC(0) from Stage~1. Stage~2 is designed to prune tiny but nonzero coefficients so as to remove moralization edges in S2 when the unshielded v-structures are exposed by the ordering, decreasing $\|\widehat L_\pi\|_0$.
Let
\vspace{-1mm}
\[
\widehat{\pa}_\pi(j):=\{i<j:(j,i)\in\supp(\widehat{L}_\pi)\}
\]

\vspace{-2mm}
denote the candidate parent set of node $j$ under ordering $\pi$.
For each $j$, we refit the local regression restricted to $\widehat{\pa}_\pi(j)$,
\vspace{-1mm}
\[
\widehat{\beta}_{j,\widehat{\pa}_\pi(j)}
\in
\arg\min_{\beta\in\mathbb{R}^{|\widehat{\pa}_\pi(j)|}}
\bigl\|X_j - X_{\widehat{\pa}_\pi(j)}\beta\bigr\|_2^2,
\]

\vspace{-2mm}
and let $p_{j,i}$ be the usual (asymptotic) $t$-test $p$-value for testing $H_0:\beta_{j,i}=0$
in this refitted regression.
We then define the pruned factor $\widetilde{L}_\pi$ by entrywise masking of $\widehat{L}_\pi$:
\vspace{-1mm}
\begin{align*}
(\widetilde{L}_\pi)_{j i}
&:=
(\widehat{L}_\pi)_{j i}\,\mathbf{1}\{ i\in\widehat{\pa}_\pi(j),\ p_{j,i}\le \alpha^{(2)}\},
\quad (i<j),\\
(\widetilde{L}_\pi)_{j j}
&:=
(\widehat{L}_\pi)_{j j}.
\end{align*}

\vspace{-3mm}
This refit-and-test step follows post-selection OLS refitting ideas~\citep{belloni2012sparse,belloni2014inference},
and the subsequent test-based pruning is in the spirit of screen-and-clean procedures~\citep{wasserman2009high}.

\textbf{Stage 3).}
Finally, \textsc{SCOPE} selects an ordering
\vspace{-1mm}
\[
\widehat{\pi}\in\arg\min_{\pi\in\mathbb{T}_p}\|\widetilde{L}_\pi\|_0
\]

\vspace{-3mm}
and returns the corresponding directed edge set
\vspace{-1mm}
\[
\widehat{E}:=\widetilde{E}(\widehat{\pi},\widetilde{L}_{\widehat{\pi}})
=
\{(\widehat{\pi}(j),\widehat{\pi}(k)):\ k>j,\ (\widetilde{L}_{\widehat{\pi}})_{k j}\neq 0\}.
\]

\vspace{-3mm}
The criterion $\|\widetilde L_\pi\|_0$ may have multiple minimizers over $\mathbb{T}_p$,
as ordering-based sparsity objectives can be indifferent among orderings within the same MEC~\citep{raskutti2018sp,solus2021consistency}.
In case of ties, any minimizer may be selected; if desired, ties can be broken by a secondary score such as BIC computed from the refitted local regressions~\citep{chickering2002ges,teyssier2005ordering}.

\textbf{Remark: Well-posedness.}
Stages~1) and~3) of \textsc{SCOPE} are deterministic given their inputs. Stages~0) and~2) employ standard statistical procedures that are consistent under appropriate assumptions. Accordingly, the algorithm is well-defined in finite sample setting, with statistical guarantees inherited from the underlying components.




\vspace{-2mm}
\subsection{One-pass SCOPE with Approximate Minimum Degree Ordering}
\vspace{-1mm}

We adopt a data-driven choice $\mathbb{T}_p\!=\!\{\hat\pi_{\mathrm{AMD}}\}$ with $|\mathbb{T}_p|\!=\!1$, where $\hat\pi_{\mathrm{AMD}}$ is an approximate minimum-degree (AMD) ordering computed from $\widehat{\Omega}$~\citep{amestoy1996amd}.
This minimalist choice is justified by Theorem~\ref{thm:guarantees}, which ensures that \textsc{SCOPE} remains informative even with a singleton candidate ordering set by restricting evaluation to the precision-screened space induced by the mask.

Two additional considerations support using AMD. First, minimum-degree elimination is a classical fill-reducing heuristic for sparse triangular factors~\citep{george1989mdo,berman1990mdo}. Second, under the no-cancellation condition, the precision support coincides with the moralized skeleton (Lemma~\ref{lem:mask_validity}), so degree inflation caused by moralization tends to make minimum-degree orderings closer to reverse orderings in practice~\citep{loh2014icov}. Empirical results in Sections~\ref{sec:experiments} and~\ref{sec:real} confirm that this minimalist choice remains competitive at large scale.

\vspace{-2mm}
\subsection{Computational Complexity}
\label{subsec:complexity}
\vspace{-1mm}

We analyze the runtime of one pass of Algorithm~\ref{alg:scope} for a fixed ordering $\pi$.
Stage~0 (precision estimation and mask construction) is a plug-in component and may be implemented with \emph{any} consistent precision estimator. To make the discussion concrete and to match our recommended implementation, we state the complexity under the DRO-calibrated graphical lasso. Under this choice, let $B$ be the number of bootstrap replicates and let $T_{\mathrm{GL}}$ be the number of iterations of the GL solver. Let $d_m$ be the maximum degree of the screened graph induced by the Stage~0 mask $\widehat M$.

\begin{theorem}[Complexity for a fixed ordering]
\label{thm:complexity_fixed_pi}
Fix an ordering $\pi$ and consider the singleton set $\mathbb{T}_p\!=\!\{\pi\}$.
Suppose Stage~0 uses the DRO-calibrated GL.
One pass of Algorithm~\ref{alg:scope} that computes $\widetilde L_\pi$ from $X$ (Stages~0--2) runs in
\[
T(\pi)= O\!\left(B\,n\,p^2 + T_{\mathrm{GL}}\,p^3 + p\,n\,d_m^{\,2}\right).
\]
\end{theorem}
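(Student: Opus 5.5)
The bound follows by accounting for the cost of each stage of Algorithm~\ref{alg:scope} separately, with $\mathbb{T}_p=\{\pi\}$ fixed, and then adding them up.

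\textbf{Stage 0 and mask construction.} Forming the sample covariance $\widehat\Sigma=n^{-1}X^\top X$ costs $O(np^2)$. The DRO calibration of the regularization level at $\alpha^{(0)}$ uses $B$ bootstrap replicates. Each replicate resamples $n$ rows and recomputes a $p\times p$ covariance, or an equivalent empirical-process quantity such as a max-norm deviation, at cost $O(np^2)$, for $O(Bnp^2)$ in total. With the level calibrated, one graphical lasso fit is run. I will treat each GL iteration as having worst-case cost $O(p^3)$: block-coordinate descent sweeps $p$ columns, and each lasso subproblem touches $O(p^2)$ entries, or equivalently one dense inverse/Cholesky update per iteration. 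This gives $O(T_{\mathrm{GL}}p^3)$. Building $\widehat M$ and applying the permutation $P_\pi$ to $\widehat\Omega$ and $\widehat M$ are entrywise operations costing $O(p^2)$, which is absorbed into the first term.

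\textbf{Stage 1 (masked IC(0)).} Zero-fill incomplete Cholesky only updates entries that lie inside the mask. For column $j$, the entries $(\widehat L_\pi)_{kj}$ with $(k,j)$ in the mask are computed as inner products over previously computed columns. These inner products are restricted to the pattern of $\widehat M_\pi$, so each row/column has at most $d_m$ off-diagonal nonzeros. Each of the at most $d_m$ entries in column $j$ therefore needs an $O(d_m)$ sparse inner product. Summing over $j$ gives $O(p\,d_m^2)$, which is dominated by $O(p\,n\,d_m^2)$ since $n\ge1$. Diagonal square roots contribute $O(p)$. The key fact used here is that IC(0) never creates entries outside $\widehat M_\pi$ by construction (equation~\eqref{eq:masked_ic0_constraints}), so the sparsity bound $d_m$ stays valid throughout the elimination. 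This is the step that needs care: the sparse inner products must be charged properly, for example by storing the rows of $\widehat L_\pi$ as sparse lists so that each inner product costs the smaller of the two row supports.

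\textbf{Stage 2 (refit-and-test).} For each node $j$, the candidate parent set satisfies $|\widehat{\pa}_\pi(j)|\le d_m$, since it is a subset of the mask neighbors. Forming the Gram matrix $X_{\widehat{\pa}}^\top X_{\widehat{\pa}}$ and the vector $X_{\widehat{\pa}}^\top X_j$ costs $O(n d_m^2)$. Solving the least-squares problem and extracting the diagonal of the inverse Gram matrix for the $t$-statistics costs $O(d_m^3)$. The residual variance costs $O(n d_m)$. The $p$-value thresholding costs $O(d_m)$. Since one may assume $d_m\le n$ (otherwise OLS is ill-posed), $d_m^3\le n d_m^2$, and the per-node cost is $O(n d_m^2)$. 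Summing over $p$ nodes gives $O(p\,n\,d_m^2)$.

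Adding the three stages yields $T(\pi)=O(Bnp^2+T_{\mathrm{GL}}p^3+pnd_m^2)$. The main obstacle is fixing a concrete cost model for Stage~0, namely the per-iteration GL cost and the per-replicate DRO bootstrap cost. I would state these explicitly as properties of the chosen implementations rather than derive them.
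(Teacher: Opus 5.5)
Your proposal is correct and follows essentially the same stagewise accounting as the paper. Stage~0 costs $O(Bnp^2 + T_{\mathrm{GL}}p^3)$. Masked IC(0) costs $O(p\,d_m^2)$, since each of the $O(p\,d_m)$ mask entries needs an $O(d_m)$ sparse inner product; you organize this by columns and the paper by rows. The refits cost $O(p\,n\,d_m^2)$, and the $p\,d_m^2$ term is absorbed. As in the paper, absorbing the $O(d_m^3)$ solve relies on the regime $d_m<n$, and strictly you only need $|\widehat{\pa}_\pi(k)|\le n$ per node, which OLS with $t$-tests requires anyway; state this as an explicit assumption rather than something one ``may assume.''
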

\vspace{-2mm}
Theorem~\ref{thm:complexity_fixed_pi} aggregates the per-stage costs. In the above DRO-GL instantiation, Stage~0 typically dominates in dense-solver worst-case analyses through the $O(T_{\mathrm{GL}}p^3)$ term. Given the mask, IC(0) in Stage~1 contributes $O(p\,d_m^{2})$ via sparse elimination updates on screened neighborhoods. Stage~2 (\textsc{RefitTest}) fits $p$ local regressions with at most $d_m$ predictors, costing $O(p\,n\,d_m^{2})$ when $d_m\!<\!n$. Hence, after Stage~0 the cost scales only linearly in $p$ and is otherwise governed by $d_m$.

Even when Algorithm~\ref{alg:scope} evaluates multiple orderings, Stage~0) is computed once and shared across all $\pi\in\mathbb{T}_p$. Hence, sweeping $\mathbb{T}_p$ costs $|\mathbb{T}_p|$ times the Stage~1--2) work, i.e., $O(|\mathbb{T}_p|\,p\,n\,d_m^{2})$. There can be additional computational costs involved such as computing an AMD ordering from $\widehat{\Omega}$, which costs $O(p\,d_m^{2})$.

\begin{table}[t]
\centering
\caption{Worst-case computational complexity of representative causal structure learning methods.}
\vspace{-1mm}
\label{tab:complexity_summary}
\setlength{\tabcolsep}{1pt} 
\begin{tabular}{l l c}
\hline
Method & Category & Worst-case cost \\
\hline
\textbf{\textsc{SCOPE}} & Sparse Cholesky &
$O\!\left(Bnp^2 + T_{\mathrm{GL}}p^3 + pnd_m^2\right)$ \\
FROSTY & Sparse Cholesky &
$O\!\left(Bnp^2 + T_{\mathrm{GL}}p^3\right)$ \\
RFD & Sparse Cholesky & $O(p^{w+3})$ \\
\hline
PC & Constraint-based & $O(p^{q+2})$ \\
GSP & Constraint-based & $O(p^{q+2})$ \\
\hline
GES & Score-based & $O(p^2 2^p)$ \\
MMHC & Score-based & $O(p^2 2^p)$ \\
\hline
NOTEARS & Continuous opt. & $O(T(n\vee p)p^2)$ \\
GOLEM & Continuous opt. & $O(np^2 + Tp^3)$ \\
\hline
\end{tabular}
\end{table}

Table~\ref{tab:complexity_summary} summarizes worst-case complexity bounds for structure learning baselines: constraint-based methods (PC, GSP)~\cite{kalisch2007pc,solus2021consistency}, score-based methods (GES, MMHC)~\cite{chickering2002ges,tsamardinos2006mmhc}, continuous optimizations (NOTEARS, GOLEM)~\cite{zheng2018notears,ng2020golem}, and sparse-Cholesky pipelines (RFD, FROSTY)~\cite{squires2020rfd,bang2023frosty}. In the table, $q$ is the maximum conditioning-set size for CI-based methods, $w$ is the search-depth parameter for RFD, and $T$ is the number of optimization iterations for NOTEARS and GOLEM. Among the compared methods, \textsc{SCOPE} and FROSTY admit essentially cubic-order worst-case bounds in $p$, up to iteration counts and constant factors, enabling scalability at large $p$.
By contrast, NOTEARS and GOLEM may scale poorly in practice because they repeatedly perform dense matrix computations, often with per-iteration complexity on the order of $p^3$; see Section~\ref{subsec:exp_comparison}.

\begin{figure*}[t]
    \centering
    \includegraphics[width=0.88\linewidth, trim=0pt 0pt 0pt 25pt, clip]{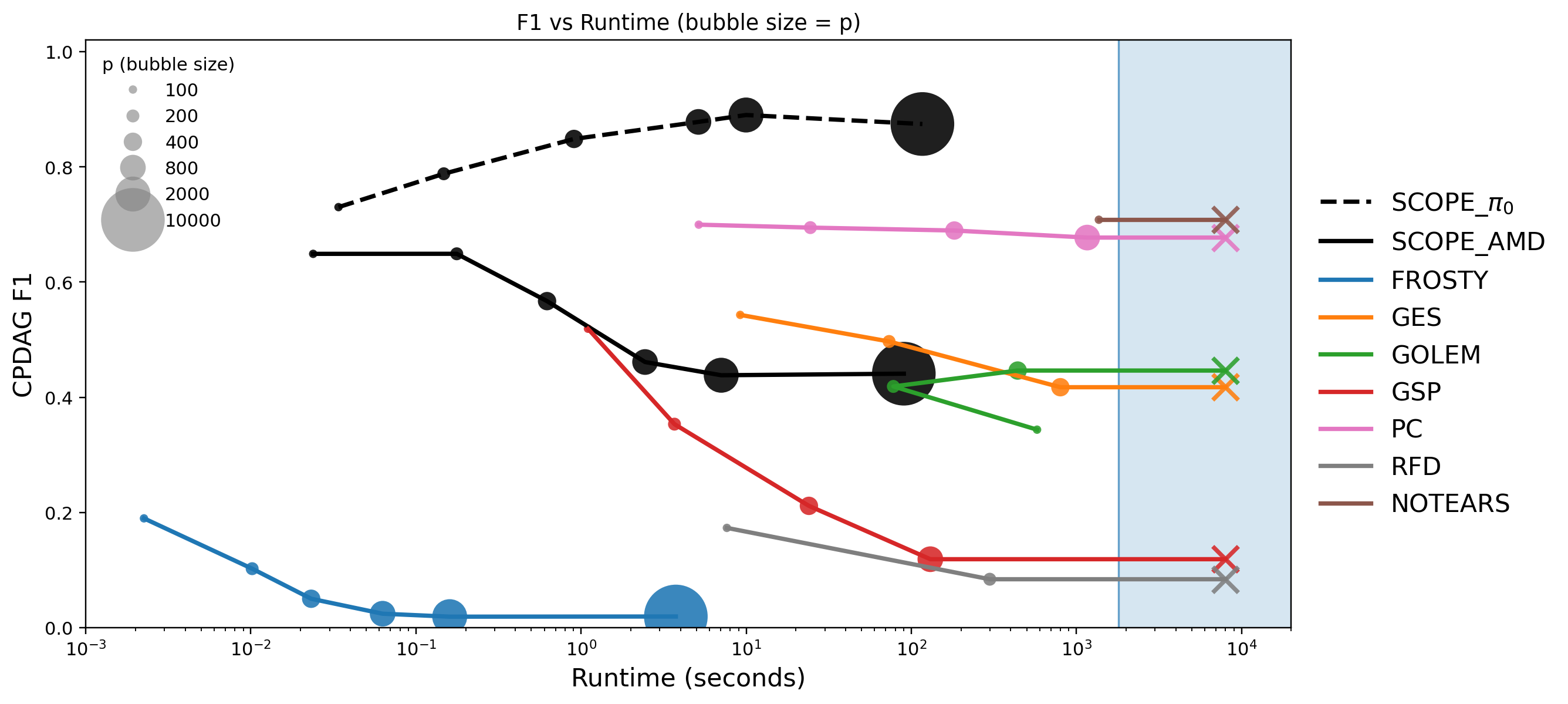}
    \vspace{-2mm}
    \caption{CPDAG $F_1$ versus runtime across problem sizes (bubble size $p$); $\boldsymbol\times$ denotes timed-out runs (shaded region). Two one-pass SCOPE results are shown: when given $\pi_0$ (SCOPE\_$\pi_0$) and when given AMD ordering (SCOPE\_AMD).}
    \vspace{-4mm}
    \label{fig:f1_vs_runtime_bubbles}
\end{figure*}

\vspace{-2mm}
\section{Numerical Experiments}
\label{sec:experiments}
\vspace{-1mm}

In this section, we empirically evaluate our one-pass SCOPE algorithm in finite sample setting where $\widehat{\Omega}$ is estimated from data. Specifically, we evaluate the CPDAG recovery performance when a reverse topological ordering $\pi_0$ or when a heuristic AMD ordering is used.
In Section~\ref{subsec:exp_comparison}, we compare our approach with major causal structure learning methods for various $p$, with $n$ scaled as $20p$ up to a cap of $20,000$.
In Section \ref{subsec:exp_scalability}, one-pass SCOPE algorithm's performance in high-dimensional setting, i.e., $n\leq p$, is explored, highlighting the strength of our unique approach. Additional results and implementation details are given in Appendix~\ref{app:experiments}.


\textbf{Experimental setup.}
Synthetic data used in this section is generated from a linear SEM~\eqref{eq:sem_model} with independent errors.
To reflect the sparsity regime considered throughout the paper, we sample bounded in-degree DAGs where each node selects at most $d$ parents among earlier nodes in an ordering.
We draw nonzero edge weights uniformly from $\pm[0.6,0.8]$ and sample error variances from $[0.8,1.0]$.
We also include non-Gaussian errors to assess robustness beyond Gaussianity.
We report CPDAG-skeleton $F_1$, normalized structural Hamming distance on the CPDAG (nSHD; SHD divided by the number of true edges)~\citep{pmlr-v177-yang22a,ban2024csl_llm}, and wall-clock runtime with a 1800\,s timeout. For precision-based pipelines (\textsc{SCOPE}, FROSTY, RFD), we measure runtime with $\widehat{\Omega}$ given, because precision-estimation time varies widely across solvers and tuning parameter.

\vspace{-2mm}
\subsection{Objective Validation \& Comparison}
\label{subsec:exp_comparison}
\vspace{-1mm}

We vary $p\in\{100,200,400,800,2k,10k\}$ with bounded in-degree $d=5$ under Gaussian noise. 
Figure~\ref{fig:f1_vs_runtime_bubbles} shows that one-pass \textsc{SCOPE} (using estimated $\widehat{\Omega}$) sustains excellent CPDAG recovery as indicated by $F_1$ up to $p\!=\!10k$ (heuristic $\approx0.5$, oracle $\approx0.9$).
Given a correct ordering, one-pass SCOPE (labeled \textsc{SCOPE}\_$\pi_0$) attains the best $F_1$ across all $p$, consistent with the soundness guarantee for the ReSP formulation (Theorem~\ref{thm:guarantees}).
With the AMD ordering, one-pass SCOPE remains competitive (labeled \textsc{SCOPE}\_AMD) despite the decrease in $F_1$, consistent with the robustness guarantee under ordering misspecification (Theorem~\ref{thm:guarantees}).

In the baseline comparison below, we report \textsc{SCOPE} under the AMD heuristic ordering.
Compared with baselines, \textsc{SCOPE} attains a favorable balance of accuracy and scalability.
FROSTY runs in (sub-)second time across all $p$, but its $F_1$ quickly collapses toward zero as $p$ increases.
For GSP, $F_1$ decreases from $0.5$ to $0.1$, and the method fails to return outputs beyond $p\!=\!800$. RFD stays below $0.2$ in $F_1$ and is feasible only up to $p\!=\!200$.
GES and GOLEM are moderately accurate when they finish, with $F_1$ between $0.4$ and $0.5$, but they become infeasible beyond $p\!=\!400$.
NOTEARS completes only at $p\!=\!100$.
PC attains relatively high $F_1$ near $0.7$ at moderate scales but becomes infeasible beyond $p\!=\!800$.
In contrast, \textsc{SCOPE} is the only method that consistently returns results up to $p\!=\!10k$ within the time budget while retaining strong accuracy.



\vspace{-2mm}
\subsection{High-Dimensional Performance}
\label{subsec:exp_scalability}
\vspace{-1mm}

\begin{table}[t]
\centering
\caption{Large-$p$ performance on synthetic linear SEMs.
}
\label{tab:scalability}
\vspace{-3mm}
\begin{subtable}[t]{\linewidth}
\centering
\subcaption{$d=1$}
\vspace{-1mm}
\begin{tabular}{c c c c}
\hline
$n$ & Method & $F_1$ & nSHD \\
\hline
$0.5p$ & SCOPE  & \textbf{0.994} (0.004) & \textbf{0.050} (0.043) \\
$0.5p$ & FROSTY & 0.642 (0.007) & 1.116 (0.034) \\
$p$    & SCOPE  & \textbf{0.951} (0.012) & \textbf{0.264} (0.050) \\
$p$    & FROSTY & 0.635 (0.008) & 1.152 (0.041) \\
\hline
\end{tabular}
\end{subtable}

\vspace{0.6em}

\begin{subtable}[t]{\linewidth}
\centering
\subcaption{$d=5$}
\vspace{-1mm}
\begin{tabular}{c c c c}
\hline
$n$ & Method & $F_1$ & nSHD \\
\hline
$0.5p$ & SCOPE  & \textbf{0.614} (0.117) & \textbf{0.917} (0.046) \\
$0.5p$ & FROSTY & 0.024 (0.001) & 78.041 (4.258) \\
$p$    & SCOPE  & \textbf{0.633} (0.031) & \textbf{1.322} (0.189) \\
$p$    & FROSTY & 0.023 (0.001) & 85.968 (2.966) \\
\hline
\end{tabular}
\end{subtable}

\end{table}

We evaluate high-dimensional performance at $p\!=\!10^4$ with $n\!\in\!\{0.5p,p\}$ under $t$-distributed noise, deferring other noise distributions to Appendix~\ref{app:exp_scalability}.
At this scale, many baselines are computationally infeasible, so we compare \textsc{SCOPE} (AMD heuristic ordering) with FROSTY on bounded in-degree DAG instances; Table~\ref{tab:scalability} reports CPDAG accuracy.

In the zero-fill-friendly regime $d\!=\!1$, moralization does not introduce co-parent links, so sparse elimination incurs essentially no fill~\citep{rose1976elimination,george1981computer}.
Accordingly, \textsc{SCOPE} remains near-exact at $p\!=\!10^4$ under the AMD heuristic ordering, achieving $F_1\!>\!0.95$ with small nSHD.
In contrast, FROSTY attains substantially lower $F_1$ (about $0.64$) and noticeably larger nSHD.
The gap widens at $d\!=\!5$, where co-parent effects densify the precision-support mask.
\textsc{SCOPE} maintains moderate accuracy ($F_1\!>\!0.61$) with nSHD around $1$, whereas FROSTY exhibits near-zero $F_1$ and extremely large nSHD.
Overall, these results show that \textsc{SCOPE} continues to yield usable CPDAG estimates in high dimensions, even with heavy-tailed noise.

\vspace{-4mm}
\section{Breast Cancer Data Analysis}
\label{sec:real}
\vspace{-1mm}

To demonstrate real-world performance, we analyze an mRNA expression dataset of invasive ductal breast carcinoma samples from The Cancer Genome Atlas (TCGA) \citep{TheCancerGenomeAtlasNetwork2012}, processed as in \cite{Koh2019}.
Because transcription factors (TFs) regulate the expression of target genes, inferred edges between them are expected to be directed from TF to target.
We evaluate the efficacy of SCOPE with the AMD ordering in inferring the CPDAG by comparing our directional discoveries with the biologically validated TF regulatory network from \cite{Koh2019}.

We analyze a large TCGA mRNA expression dataset with $(n,p)=(599,16325)$ and apply \textsc{SCOPE} on the full, unpartitioned dataset. Due to the limited scalability of the baseline methods, we evaluate PC and GES using a split-and-union protocol based on random disjoint feature partitions. Specifically, we run PC on subsets of $\sim$800 features (GES on subsets of $\sim$400 features), and subsequently take the union of the directed edges across all respective splits. The choices of these baselines and their partition sizes are based on the runtime limits established in Figure~\ref{fig:f1_vs_runtime_bubbles}. Additional results and implementation details are provided in Appendix~\ref{app:real}.

\begin{table}[t]
\centering
\caption{Proportions of biologically validated TF$\to$target path recovered in the estimated CPDAGs.
}
\vspace{-1mm}
\label{tab:tcga_tfdb_hops_compact}
\begin{tabular}{l c c c c}
\toprule
& \multicolumn{2}{c}{One-step Paths} & \multicolumn{2}{c}{Two-step Paths} \\
\cmidrule(lr){2-3} \cmidrule(lr){4-5}
Method & Fwd/Rev & Ratio & Fwd/Rev & Ratio \\
\midrule
PC & 0.028/0.024 & 1.167 & 0.051/0.045 & 1.133 \\
GES & 0.028/0.024 & 1.167 & 0.056/0.045 & 1.244 \\
\textsc{SCOPE} & 0.078/0.045 & \textbf{1.733} & 0.108/0.061 & \textbf{1.770} \\
\bottomrule
\end{tabular}
\end{table}

Our main performance metric is whether edge directions are identified correctly versus in error. Specifically, we calculate the forward and reverse edge identification rates in estimated CPDAGs. The \emph{forward} rate (Fwd) is the number of validated TF$\to$target pairs supported by a directed path from TF to target, normalized by the number of directed edges. One-step (direct TF$\to$target edge) and two-step (directional TF$\to$target path with one intermediate mediator) paths are considered. The \emph{reverse} rate (Rev) counts the same pairs with direction flipped. 
Note that while the reversed edges misidentify the true causal direction of the regulatory relationship, their presence in the external database still indicates a biologically meaningful association.

Table~\ref{tab:tcga_tfdb_hops_compact} summarizes these two evaluation metrics and their ratios.
\textsc{SCOPE} produces 271,066 directed edges, whereas PC and GES produce 62,422 and 308,533, respectively.
\textsc{SCOPE} achieves substantially higher forward rates than PC and GES and exhibits the largest forward-to-reverse ratios. \textsc{SCOPE} also has the highest proportion of externally-validated associations (regardless of direction): 0.292 of its directed paths are biologically validated, compared with 0.148 and 0.153 for PC and GES, respectively. Despite operating on the full dataset, \textsc{SCOPE} completes CPDAG estimation in about 2 hours, versus roughly 5 hours for PC and 20 hours for GES under the split-and-union protocol.

Although random feature splitting enables CPDAG estimation using scalability-limited methods, PC and GES, the procedure can potentially have adverse impacts. A plausible explanation is that random feature splitting can separate TFs from their targets, rendering the recovery of regulatory evidence impossible.
Moreover, split-wise marginalization of many genes can distort the CI relations used for orientation and introduce confounding-like effects. Combined with results in Table~\ref{tab:tcga_tfdb_hops_compact}, these observations highlight the practical value of methods such as \textsc{SCOPE} that operate on the full set of features, thereby preserving direct evidence and improving the credibility of inferred directions.

\vspace{-3mm}
\section{Discussion}
\vspace{-1mm}

This work advances MEC recovery to large graphs by reframing SP learning, thereby alleviating the combinatorial search over variable orderings. Rather than enforcing the exact Cholesky constraint $\Omega_\pi = L_\pi L_\pi^\top$, we show that MEC recovery is possible via the ReSP formulation. ReSP replaces exact factorization of a full matrix with support-level indicated entries and thereby loosens the tight coupling between variable ordering and numerical factorization that affects structural sparsity.
\textsc{SCOPE} operationalizes this relaxation through a scalable sparse-Cholesky pipeline for ordering comparison. At the population level, the relaxation preserves the identifiability guarantees of SP under suitable conditions and also provides an intrinsic robustness property. Even under ordering misspecification, feasibility under the precision-support mask confines induced edges to the moralized skeleton. Empirically, ReSP and its implementation \textsc{SCOPE} remain competitive at a scale of $10^4$ variables in simulations and excels in analyzing real mRNA expression data as validated with an external database.

The ReSP formulation suggests several extensions. A first direction is to develop local ordering search tailored to ReSP, for example via swap- and tuck-type moves.
Second, the modularity of \textsc{SCOPE} makes parallel and distributed variants natural, such as multi-ordering evaluation and parallelized ordering heuristics, with the potential to improve scalability further.
Finally, enforcing feasibility through the mask rather than an exact factorization identity may provide a useful starting point beyond well-specified linear SEMs.
Specifically, it can be adapted to computationally challenging misspecified regimes, such as interventional and latent-variable settings~\citep{squires20a,squiresLatentClusters}.

\begin{contributions} 
    Briefly list author contributions. 
    This is a nice way of making clear who did what and to give proper credit.
    This section is optional.

    H.~Q.~Bovik conceived the idea and wrote the paper.
    Coauthor One created the code.
    Coauthor Two created the figures.
\end{contributions}

\begin{acknowledgements} 
    Authors thank Professor Hyung Won Choi at National University of Singapore for helpful advice on real data analysis.
\end{acknowledgements}

\bibliography{uai2026-ReSP}

\newpage

\onecolumn

\title{Relaxed Sparsest-Permutation Formulation for Causal Discovery at Scale (Appendix)}
\maketitle

\appendix

\section{A mismatch between fill reduction and SP sparsity}
\label{app:counterexample}

This appendix provides an illustrative example showing that minimizing fill in the undirected elimination graph does not necessarily yield a sparse triangular factor under the SP criterion.
It also highlights a subtle point: standard fill-reducing approaches optimize an elimination-graph objective that need not match the realized support of the numerical Cholesky factor, although the two are often aligned in practice for sparse Cholesky factorization.

\begin{figure}[t]
  \centering

  \begin{subfigure}[t]{0.19\textwidth}
    \centering
    \includegraphics[width=\linewidth]{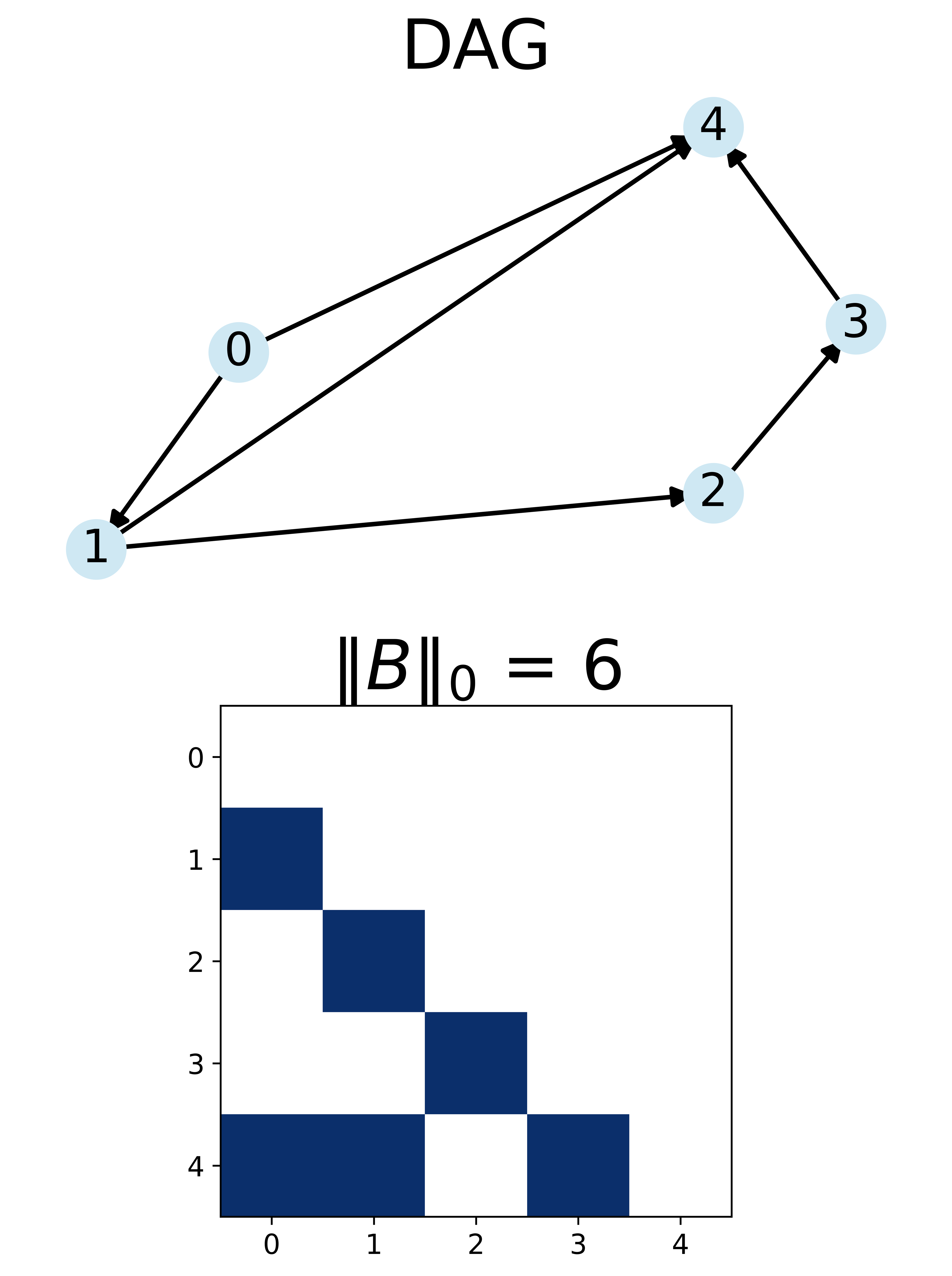}
    \caption{DAG}
    \label{fig:counterexample:dag}
  \end{subfigure}\hfill
  \begin{subfigure}[t]{0.19\textwidth}
    \centering
    \includegraphics[width=\linewidth]{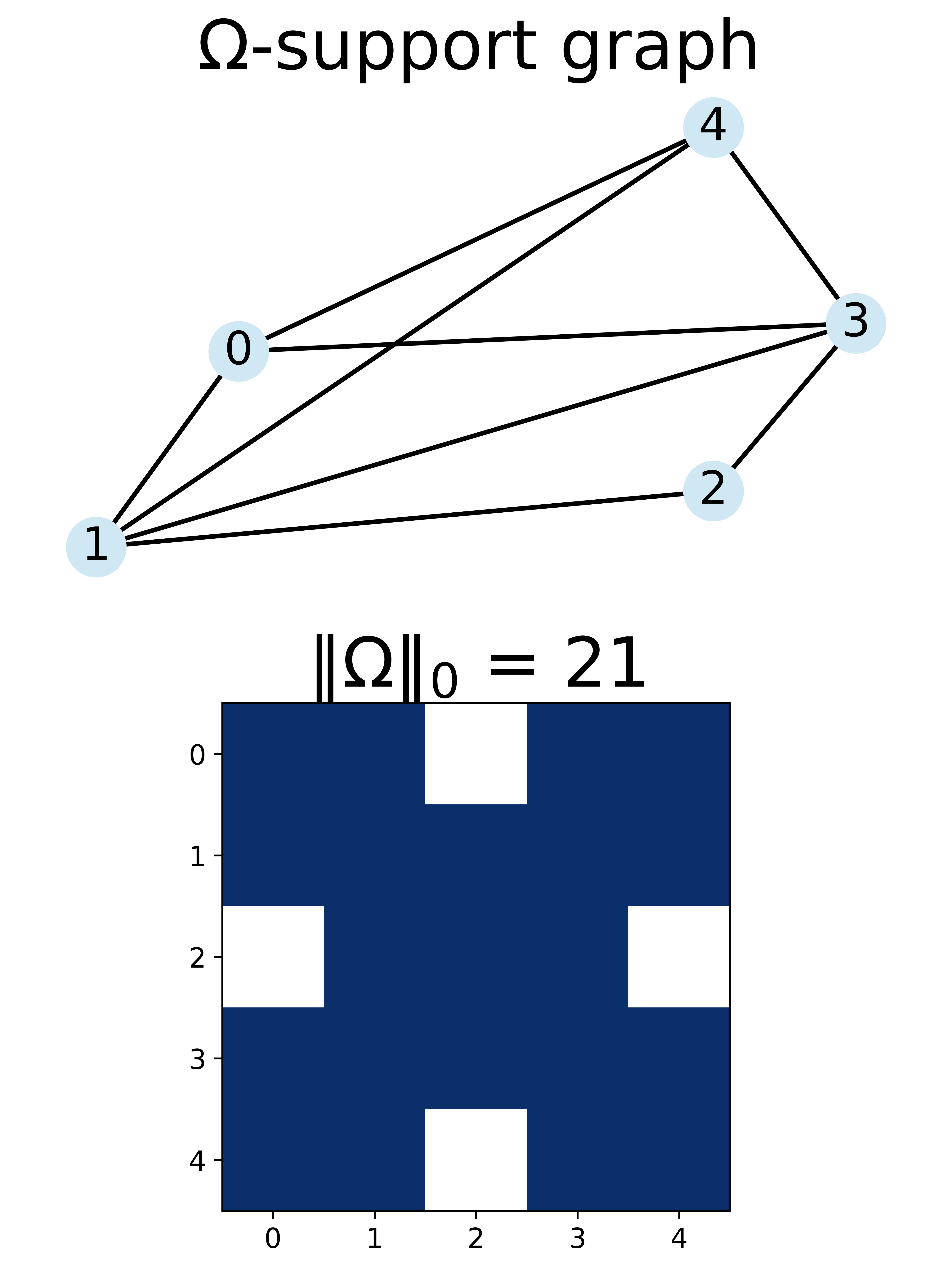}
    \caption{Moral graph}
    \label{fig:counterexample:moral}
  \end{subfigure}\hfill
  \begin{subfigure}[t]{0.19\textwidth}
    \centering
    \includegraphics[width=\linewidth]{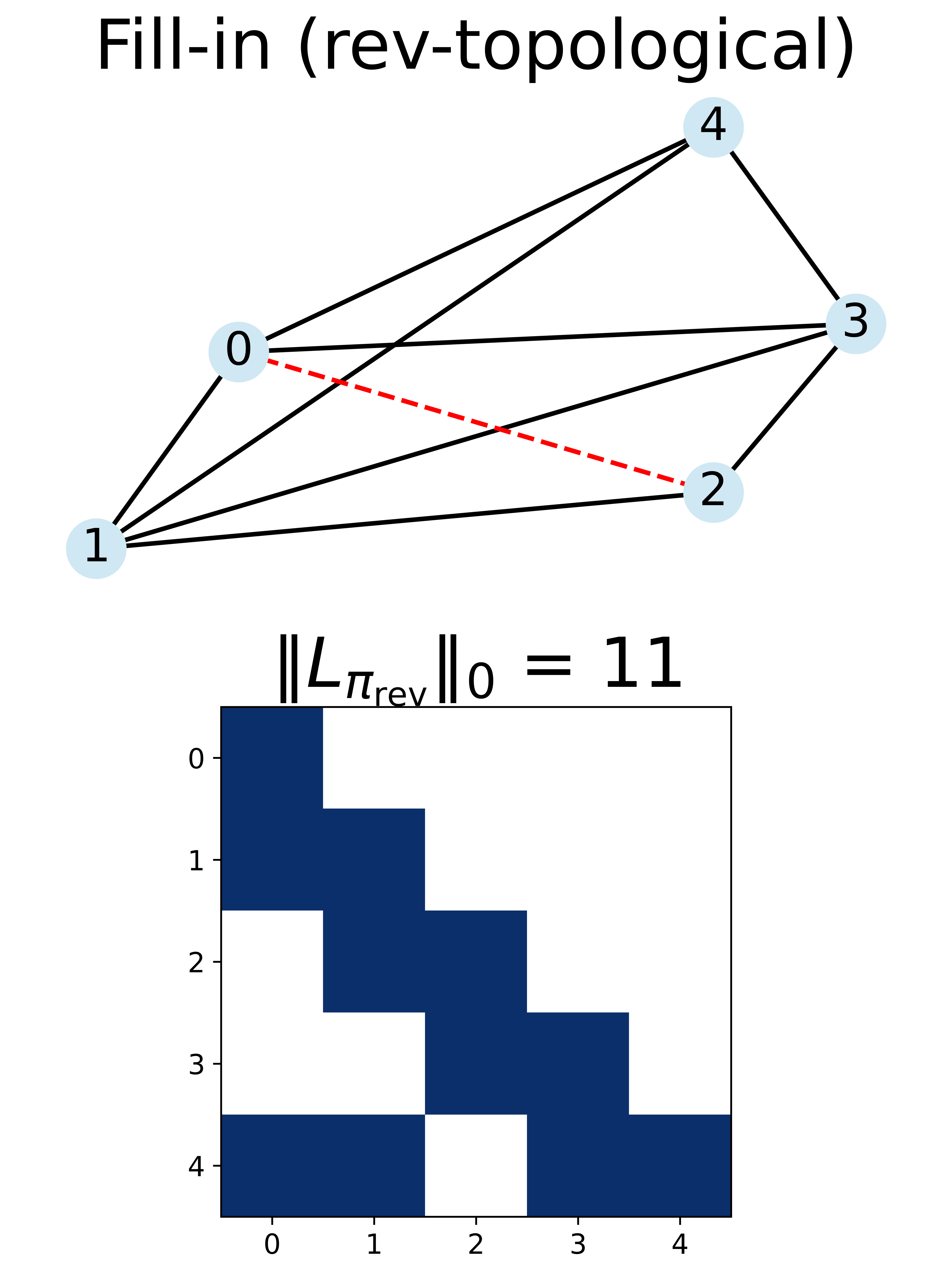}
    \caption{rev-topo}
    \label{fig:counterexample:rev}
  \end{subfigure}\hfill
  \begin{subfigure}[t]{0.19\textwidth}
    \centering
    \includegraphics[width=\linewidth]{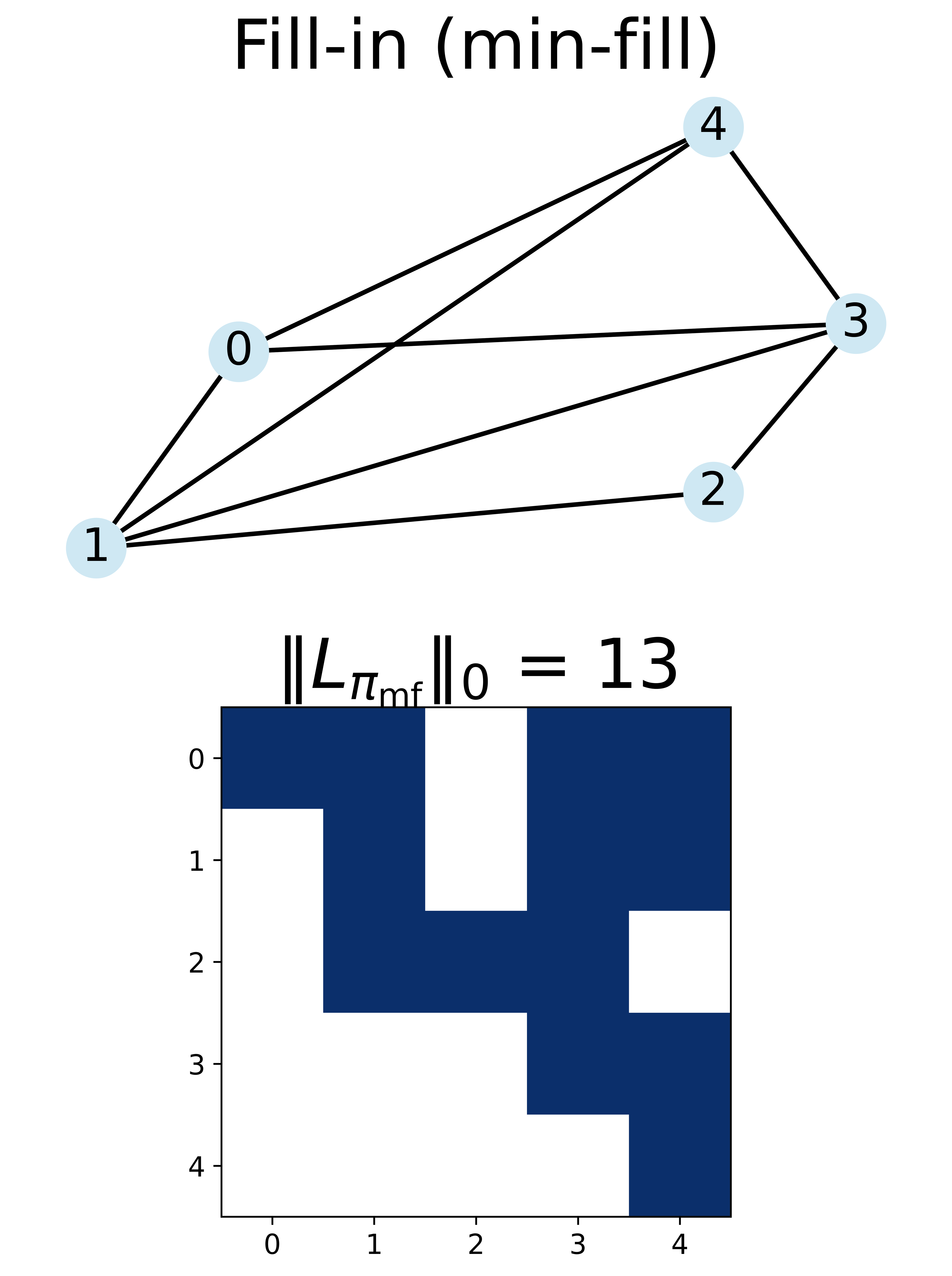}
    \caption{min-fill}
    \label{fig:counterexample:minfill}
  \end{subfigure}\hfill
  \begin{subfigure}[t]{0.19\textwidth}
    \centering
    \includegraphics[width=\linewidth]{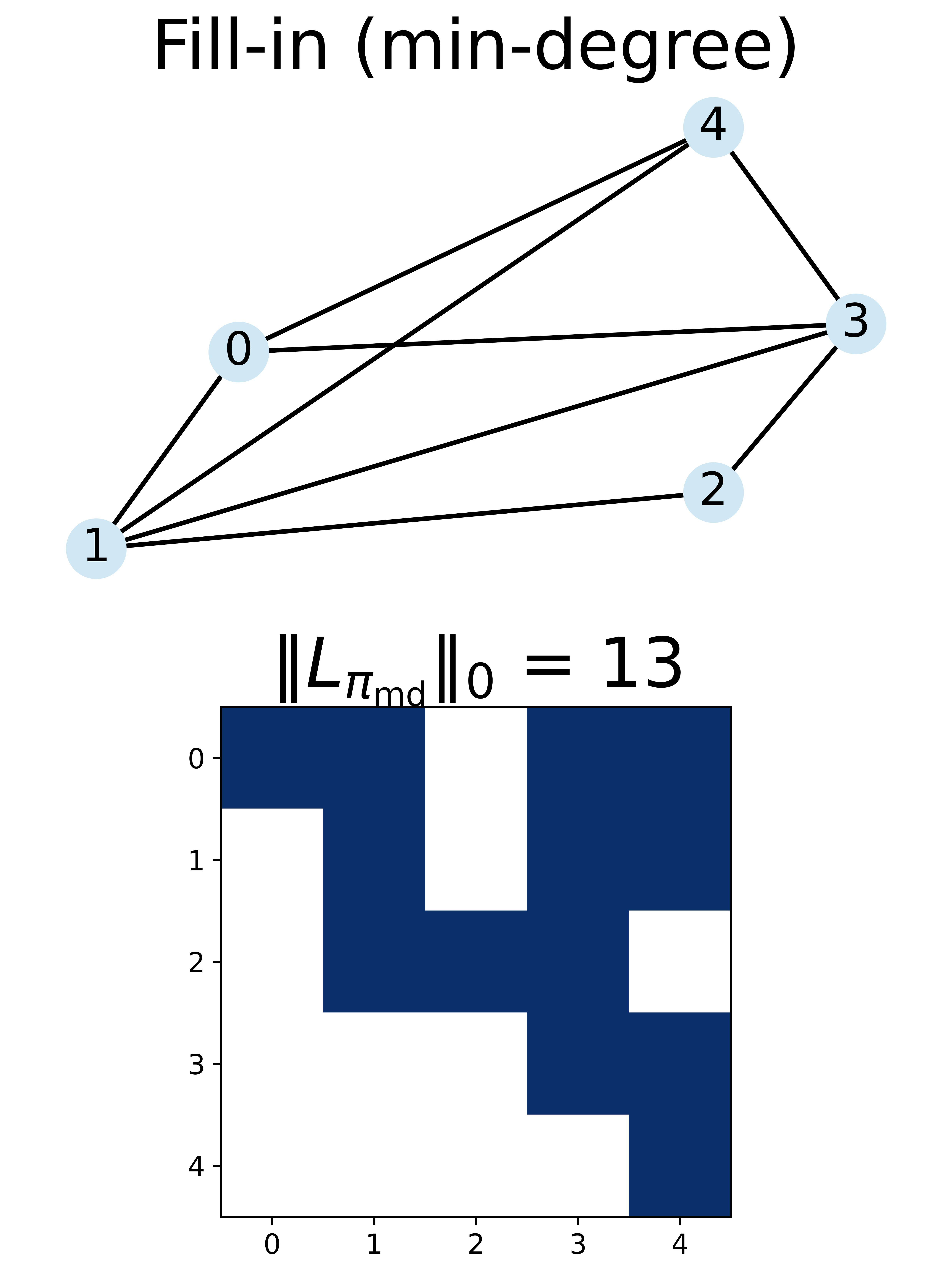}
    \caption{min-degree}
    \label{fig:counterexample:mindeg}
  \end{subfigure}

  \caption{DAG, moral graph, and triangulations under rev-topo, min-fill, and min-degree orderings, with sparsity patterns of $B$, $\Omega$, and $L$. Red dashed edges indicate fill-in.}
  \label{fig:counterexample}
\end{figure}

Figure~\ref{fig:counterexample} illustrates that fill-reducing approaches based on the undirected elimination graph can disagree with the SP objective, and that $\|L_\pi\|_0$ can be highly sensitive to the chosen ordering.
The DAG and the zero pattern of the adjacency matrix $B$ are shown in Figure~\ref{fig:counterexample}\subref{fig:counterexample:dag}.
Figure~\ref{fig:counterexample}\subref{fig:counterexample:moral} shows the moralized graph and the corresponding precision support $\supp(\Omega)$.
In particular, the common child at node $4$ induces moral edges between nodes $(0,3)$ and $(1,3)$, which are absent in the original DAG skeleton but appear in the moralized graph.
Figures~\ref{fig:counterexample}\subref{fig:counterexample:rev}--\subref{fig:counterexample:mindeg} compare three representative elimination orderings.
The reverse topological ordering is $\pi_{\mathrm{rev}}=[4,3,2,1,0]$ in Figure~\ref{fig:counterexample}\subref{fig:counterexample:rev}.
The minimum-fill ordering is $\pi_{\mathrm{mf}}=[0,2,1,3,4]$ in Figure~\ref{fig:counterexample}\subref{fig:counterexample:minfill}.
The minimum-degree ordering is $\pi_{\mathrm{md}}=[2,0,1,3,4]$ in Figure~\ref{fig:counterexample}\subref{fig:counterexample:mindeg}.

Figure~\ref{fig:counterexample}\subref{fig:counterexample:rev} shows that the reverse ordering introduces one fill edge $(0,2)$ in the undirected elimination graph, yet it yields the sparsest Cholesky factor with $\|L_{\pi_{\mathrm{rev}}}\|_0=11$.
In contrast, Figures~\ref{fig:counterexample}\subref{fig:counterexample:minfill} and~\subref{fig:counterexample:mindeg} achieve zero fill in the undirected elimination graph so that $|\mathcal{F}(\pi)|=0$, but both produce denser triangular factors with $\|L_{\pi_{\mathrm{mf}}}\|_0=\|L_{\pi_{\mathrm{md}}}\|_0=13$.
Thus, eliminating fill in the undirected elimination graph does not guarantee sparsity of $L_\pi$ under the SP criterion.

The mechanism can be seen by tracking what happens when the elimination order is aligned with the true directionality.
Under $\pi_{\mathrm{rev}}$, node $4$ is removed first.
When node $3$ is processed next, the undirected elimination graph treats the moral edges $(0,3)$ and $(1,3)$, created because $0$, $1$, and $3$ share the common child $4$, as still present.
It therefore predicts additional interactions among $\{0,1,2\}$ at the step for node 3 and produces the fill edge $(0,2)$.
Numerically, once node $4$ is removed, the coupling between $3$ and $\{0,1\}$ that is mediated only through the common child $4$ vanishes, so the moral edges $(0,3)$ and $(1,3)$ do not generate subsequent nonzeros in the Cholesky factor.
In this sense, fill predicted from the undirected elimination graph can overestimate the nonzeros that materialize in the numerical factorization.
This gap explains why $\pi_{\mathrm{rev}}$ incurs nonzero fill in the undirected elimination graph while still producing a sparser $L_\pi$.

More importantly, when the elimination order is misaligned with the true ordering, the numerical factor can become markedly denser.
This inflation is typically even more pronounced when the ordering also fails to reduce fill in the undirected elimination graph.
The effect is visible in Figures~\ref{fig:counterexample}\subref{fig:counterexample:minfill} and~\ref{fig:counterexample}\subref{fig:counterexample:mindeg}, where the triangular patterns contain many additional nonzeros, including in upper-triangular positions of the displayed sparsity maps.
Intuitively, under a misspecified ordering, the induced directed edges disagree with the true causal directions, which increases the number of nonzeros required to represent the same second-order structure in triangular form.
Consequently, using fill reduction alone as a surrogate for SP sparsity can be brittle for directed structure recovery.

Overall, this counterexample shows that controlling fill in the undirected elimination graph alone can be misleading for SP learning, which motivates imposing an ordering-agnostic screening such as restricting feasibility to $\supp(\Omega)$ before making sparsity comparisons across permutations.

\section{Finite-Sample Implementation of Stage 0}
\label{app:sample}

This section details Stage~0 of \textsc{SCOPE}: finite-sample precision estimation and precision-support mask construction.
We estimate the precision matrix using the graphical lasso and select its regularization parameter $\lambda$ via a distributionally robust optimization (DRO) calibration \citep{friedman2008glasso, cisnerosvelarde2020droglasso}.
We then apply $\lambda$-scale hard-thresholding, with the threshold scale justified by standard entrywise error bounds for precision estimation \citep{ravikumar2011logdet,liu2012copula}.
The support of the thresholded precision estimate then defines the precision-support mask.

\paragraph{Graphical lasso with DRO calibration.}
Let $\mathrm{Sym}_p^{+}$ and $\mathrm{Sym}_p^{++}$ denote the cones of symmetric positive semidefinite
and symmetric positive definite matrices, respectively.
Let $W_i:=X_iX_i^\top\in\mathrm{Sym}_p^{+}$ and
$\widehat\Sigma:=\frac{1}{n}\sum_{i=1}^n W_i\in\mathrm{Sym}_p^{+}$ be the empirical covariance matrix.
The graphical lasso estimates the precision matrix by
\begin{equation}
  \label{eq:GL}
  \min_{K\in \mathrm{Sym}_p^{++}}
  \Bigl\{
    \mathrm{tr}(K\widehat\Sigma)
    - \log\det(K)
    + \lambda\|K\|_{1,\mathrm{off}}
  \Bigr\},
\end{equation}
which corresponds to $\ell_1$-penalized Gaussian likelihood maximization \citep{friedman2008glasso}.
Following \citep[Theorem~2.1]{cisnerosvelarde2020droglasso}, the same objective also arises from a worst-case
expected Gaussian log-likelihood over a Wasserstein-type ambiguity set.
We denote by $\widehat\Omega(\lambda)$ any solution to~\eqref{eq:GL}.
It remains to choose $\lambda$ in a way that yields explicit high-probability error bounds for $\widehat\Omega(\lambda)$.

\paragraph{Entrywise deviation control and the choice of $\lambda$.}
Under the DRO interpretation of \citet{cisnerosvelarde2020droglasso}
(rooted in \citet{blanchet2019robust}), the regularization parameter $\lambda$ is naturally tied to an
entrywise deviation scale for the sample covariance.
This choice is also consistent with classical finite-sample analyses of high-dimensional precision estimation,
which typically condition on an event controlling $\|\widehat\Sigma-\Sigma\|_\infty$.
For $\alpha\in(0,1)$, define
\begin{equation}
  \label{eq:lambda_dro_inf}
  \lambda
  :=
  \inf\Bigl\{
    t>0:
    \mathbb{P}_0\!\bigl(\|\widehat\Sigma-\Sigma\|_\infty \le t\bigr)
    \ge 1-\alpha
  \Bigr\},
\end{equation}
and the associated event
\begin{equation}
\label{eq:Ealpha}
\mathcal{E}_\alpha
:=
\Bigl\{\,
\|\widehat\Sigma-\Sigma\|_\infty \le \lambda
\,\Bigr\}.
\end{equation}
In practice, $\lambda$ in~\eqref{eq:lambda_dro_inf} is estimated via the bootstrap RobSel procedure
\citep{cisnerosvelarde2020droglasso}.
Consistency of this bootstrap choice is established in low-dimensional regimes
\citep[Remark~3.4--3.5]{cisnerosvelarde2020droglasso} and \citep{tran2022family}.
In our setting, we use RobSel as a data-adaptive way to set $\lambda$ for entrywise control.

\paragraph{Entrywise bounds on $\widehat\Omega(\lambda)$ and thresholding justification.}
Under standard conditions for graphical lasso, an event like~\eqref{eq:Ealpha} is the natural input for controlling
the entrywise error of the resulting precision estimate.
In particular, under incoherence-type assumptions and bounded-degree conditions, the graphical lasso solution satisfies
\[
\|\widehat\Omega(\lambda)-\Omega\|_\infty \le C\lambda
\quad \text{on}\quad\mathcal{E}_\alpha,
\]
for a constant $C$ depending on $\Sigma$ and the model class \citep{ravikumar2011logdet}.
If, moreover, the nonzero off-diagonal entries of $\Omega$ satisfy a minimum signal strength condition
\[
\theta_{\min}:=\min_{(i,j)\in S}|\Omega_{ij}| \ge C'\lambda,
\qquad
S:=\{(i,j): i\neq j,\ \Omega_{ij}\neq 0\},
\]
then hard-thresholding at the same $\lambda$-scale recovers the off-diagonal support and signs of $\Omega$ on $\mathcal{E}_\alpha$; see \citet{ravikumar2011logdet} and \citet[Theorem~4.3]{liu2012copula}.
Concretely, for a constant $c_{\mathrm{thr}}>0$, define
\begin{equation}
\label{eq:Omega_thr}
\widehat{\Omega}^{\mathrm{thr}}_{ij}
:=
\widehat{\Omega}(\lambda)_{ij}\,
\mathbf{1}\bigl\{|\widehat{\Omega}(\lambda)_{ij}|\ge c_{\mathrm{thr}}\lambda\bigr\},
\quad (i\neq j),
\qquad
\widehat{\Omega}^{\mathrm{thr}}_{ii}:=\widehat{\Omega}(\lambda)_{ii}.
\end{equation}

\paragraph{Precision-support mask used in \textsc{SCOPE}.}
We use $\widehat\Omega^{\mathrm{thr}}$ with $c_{\mathrm{thr}}=1$ as the input precision matrix and construct the binary mask
\[
\widehat{M} := \mathbf{1}\{\widehat{\Omega}^{\mathrm{thr}} \neq 0\},
\]
with the diagonal included.
We take $\widehat{M}$ as an ordering-agnostic screen of admissible interactions.
When the above entrywise control and the minimum signal strength condition hold, the mask screens out locations where $\Omega$ is truly zero while
retaining the true support with high probability \citep{ravikumar2011logdet,liu2012copula}.
This provides theoretical evidence for using a precision-support mask as an ordering-agnostic admissible pattern for the subsequent
masked IC(0) factorization.

\section{Proofs of Theoretical results}
\label{app:proofs}

\subsection{Proof of Lemma~\ref{lem:mask_validity}}
\label{app:proof_mask_validity}

By \citet[Theorem~2 and Assumption~1]{loh2014icov}, under Assumption~\ref{assm:nocancel} we have, for all $j\neq k$,
\[
\Omega_{jk}=0
\quad\Longleftrightarrow\quad
(j,k)\notin \skel(G_0^m).
\]
Equivalently,
\[
\supp(\Omega)=\skel(G_0^m).
\]

Next, for any permutation matrix $P_\pi$,
\[
\Omega_\pi \;=\; P_\pi\,\Omega\,P_\pi^\top
\]
is a relabeling of indices, and hence
\[
\supp(\Omega_\pi)
\;=\;
\pi\bigl(\supp(\Omega)\bigr)
\;=\;
\pi\bigl(\skel(G_0^m)\bigr).
\]

Finally, since $\skel(G_0)\subseteq \skel(G_0^m)$, applying $\pi$ gives
\[
\pi\bigl(\skel(G_0)\bigr)
\;\subseteq\;
\pi\bigl(\skel(G_0^m)\bigr)
\;=\;
\supp(\Omega_\pi),
\]
which is the claim.

\subsection{Proof of Theorem~\ref{thm:guarantees}}
\label{app:proof_guarantees}

\subsubsection{Proof for Soundness}
\label{app:proof_soundness}
We first record a simple consequence of precision-support masking and IC(0)-admissibility.
\begin{lemma}[Exact factorization under precision-support masking and IC(0)-admissibility]
\label{lem:exact_factorization_from_mask}
    Assume $M=\mathbf{1}\{\Omega\neq 0\}$ (with the diagonal included) and let $\mathbb{T}_p$ be IC(0)-admissible.
    Then, for every $\pi\in\mathbb{T}_p$, the masked IC(0) output $L_\pi=\mathrm{IC0}(\Omega_\pi;M_\pi)$ satisfies 
    \[ \Omega_\pi = L_\pi L_\pi^\top, \qquad 
    \text{where}\quad \Omega_\pi:=P_\pi\Omega P_\pi^\top,\; M_\pi:=P_\pi M P_\pi^\top. 
    \]
\end{lemma}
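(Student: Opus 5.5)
The plan is to split every entry of the residual $R_\pi:=\Omega_\pi-L_\pi L_\pi^\top$ according to whether it lies on or off the mask, and to show that each part vanishes. Since $R_\pi$ can be written as
\[
R_\pi \;=\; R_\pi\circ M_\pi \;+\; R_\pi\circ(\mathbf 1-M_\pi),
\]
it suffices to show that both summands are zero.

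\textbf{On the mask.} The first summand vanishes by construction. The masked IC(0) output satisfies the first constraint in~\eqref{eq:masked_ic0_constraints}, which is exactly $(\Omega_\pi-L_\pi L_\pi^\top)\circ M_\pi=\mathbf 0$. Here we are implicitly using that IC(0) does not break down for $\pi\in\mathbb{T}_p$, i.e.\ the pivots stay positive so that $L_\pi$ is well-defined. This is part of what IC(0)-admissibility stipulates.

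\textbf{Off the mask.} For the second summand we expand
\[
R_\pi\circ(\mathbf 1-M_\pi)=\Omega_\pi\circ(\mathbf 1-M_\pi)-(L_\pi L_\pi^\top)\circ(\mathbf 1-M_\pi).
\]
The first term is zero because $M=\mathbf 1\{\Omega\neq0\}$ and permutation commutes with entrywise indicators. Concretely, $M_\pi=P_\pi MP_\pi^\top=\mathbf 1\{P_\pi\Omega P_\pi^\top\neq 0\}=\mathbf 1\{\Omega_\pi\neq0\}$, so $\Omega_\pi$ vanishes wherever $M_\pi$ does. The second term is zero by the admissibility condition~\eqref{eq:admissible}, which states that $(L_\pi L_\pi^\top)\circ(\mathbf 1-M_\pi)=\mathbf 0$ for every $\pi\in\mathbb{T}_p$.

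Adding the two parts gives $R_\pi=\mathbf 0$, i.e.\ $\Omega_\pi=L_\pi L_\pi^\top$. Because $L_\pi$ is lower triangular with positive diagonal (from the non-breakdown of IC(0)), uniqueness of the Cholesky factorization then identifies $L_\pi$ with the exact Cholesky factor of $\Omega_\pi$. This identification is what later lets us invoke Assumption~\ref{assm:SMR2} in the soundness proof.

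\textbf{Main obstacle.} The argument is essentially bookkeeping; the only delicate point is getting the hypotheses right. First, the diagonal must be included in $M$, so that the diagonal equations are enforced. Second, we need positivity of the IC(0) pivots, which is needed both for the factor to exist and for the uniqueness step. Both are covered by the stated assumptions, so I would state them explicitly rather than prove them.
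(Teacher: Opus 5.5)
Your proposal is correct and matches the paper's proof essentially step for step: you split the residual $\Omega_\pi-L_\pi L_\pi^\top$ over $M_\pi$ and $\mathbf 1-M_\pi$, the on-mask part vanishes by the IC(0) invariant, and the off-mask part vanishes because $\Omega_\pi\circ(\mathbf 1-M_\pi)=\mathbf 0$ and by admissibility~\eqref{eq:admissible}. Your extra remark, which uses uniqueness of the Cholesky factor (positive diagonal) to identify $L_\pi$ with the exact factor, is not needed for the lemma itself, but it is a correct and useful bridge to invoking Assumption~\ref{assm:SMR2} in the soundness argument.
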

\begin{proof} 
Fix $\pi\in\mathbb{T}_p$.
Since $M=\mathbf{1}\{\Omega\neq 0\}$, we have 
$\Omega\circ(\mathbf 1-M)=\mathbf 0$, 
and relabeling yields 
\[ \Omega_\pi\circ(\mathbf 1-M_\pi)=\mathbf 0. \] 
By the masked IC(0) invariants~\eqref{eq:masked_ic0_constraints}, 
\[ (\Omega_\pi-L_\pi L_\pi^\top)\circ M_\pi=\mathbf 0. \] 
Moreover, IC(0)-admissibility~\eqref{eq:admissible} gives 
\[ (L_\pi L_\pi^\top)\circ(\mathbf 1-M_\pi)=\mathbf 0. \] 
Combining the last two equations and splitting over $M_\pi$ and $(\mathbf 1-M_\pi)$, 
\[ \Omega_\pi-L_\pi L_\pi^\top 
= \bigl[(\Omega_\pi-L_\pi L_\pi^\top)\circ M_\pi\bigr] + \bigl[(\Omega_\pi-L_\pi L_\pi^\top)\circ(\mathbf 1-M_\pi)\bigr] 
=\mathbf 0.
\] 
Here, the second term vanishes because 
\[ (\Omega_\pi-L_\pi L_\pi^\top)\circ(\mathbf 1-M_\pi) 
= \Omega_\pi\circ(\mathbf 1-M_\pi)
-(L_\pi L_\pi^\top)\circ(\mathbf 1-M_\pi) 
=\mathbf 0-\mathbf 0. 
\] 
Hence $\Omega_\pi=L_\pi L_\pi^\top$. 
\end{proof} 

Lemma~\ref{lem:exact_factorization_from_mask} implies that for every $\pi\in\mathbb{T}_p$,
the constraints in~\eqref{eq:new_objective} reduce to the exact factorization constraint
\[
L_\pi L_\pi^\top=\Omega_\pi.
\]
Hence~\eqref{eq:new_objective} is equivalent to minimizing the sparsity of the exact Cholesky factor of $\Omega_\pi$
over candidate orderings:
\[
\min_{\pi \in \mathbb{T}_{p}} \ \|L_\pi\|_0
\quad \text{s.t.}\quad
L_\pi L_\pi^\top=\Omega_\pi.
\]
This is the same sparsity-based comparison underlying SP learning in \citet[Def.~2.2 and Thm.~2.3]{raskutti2018sp} so the remaining steps follow the standard SMR argument.

Let $(\hat\pi,L_{\hat\pi})$ be any minimizer of~\eqref{eq:new_objective} over $\mathbb{T}_p$.
By feasibility of $(\pi_0,L_{\pi_0})$ and optimality,
\begin{equation}
\label{eq:basic_opt_ineq}
\|L_{\hat\pi}\|_0 \le \|L_{\pi_0}\|_0.
\end{equation}
If $G_{\hat\pi}\notin\mathcal{M}(G_0)$, then Assumption~\ref{assm:SMR2} implies
\[
\|L_{\hat\pi}\|_0 > \|L_{\pi_0}\|_0,
\]
which contradicts~\eqref{eq:basic_opt_ineq}. Therefore $G_{\hat\pi}\in\mathcal{M}(G_0)$.

\subsubsection{Proof of Robustness}
\label{app:proof_robustness}

Fix an arbitrary ordering $\pi$, and let $L_\pi=\mathrm{IC0}(\Omega_\pi;M_\pi)$ be the lower-triangular matrix satisfying the masked feasibility constraints in~\eqref{eq:masked_ic0_constraints} under $\pi$.
Recall the population mask $M := \mathbf{1}\{\Omega \neq 0\}$ and its permuted version
$M_\pi := P_\pi M P_\pi^\top = \mathbf{1}\{\Omega_\pi \neq 0\}$, where $\Omega_\pi := P_\pi \Omega P_\pi^\top$.

Since $L_\pi$ is feasible for~\eqref{eq:masked_ic0_constraints}, it satisfies the support constraint
\begin{equation}\label{eq:feas_mask_app}
L_\pi \circ (1-M_\pi)=0.
\end{equation}
Consequently, for any $k>j$,
\[
(L_\pi)_{kj}\neq 0 \;\Longrightarrow\; (M_\pi)_{kj}=1 \;\Longrightarrow\; (\Omega_\pi)_{kj}\neq 0.
\]
Since $(\Omega_\pi)_{kj}=\Omega_{\pi(k),\pi(j)}$ and $\Omega$ is symmetric, the condition $(\Omega_\pi)_{kj}\neq 0$ is equivalent to both $\Omega_{\pi(k),\pi(j)}\neq 0$ and $\Omega_{\pi(j),\pi(k)}\neq 0$, i.e.,
\[
(L_\pi)_{kj}\neq 0 \;\Longrightarrow\; (\pi(k),\pi(j)),(\pi(j),\pi(k))\in \supp(\Omega).
\]
By Assumption~\ref{assm:nocancel}, $\supp(\Omega)=\skel(G_0^m)$. Therefore every directed edge induced by a nonzero entry of $L_\pi$ connects a pair adjacent in $\skel(G_0^m)$, and hence
\[
\widetilde{E}(\pi,L_\pi)
:= \bigl\{\,(\pi(j),\pi(k)):\ k>j,\ (L_\pi)_{kj}\neq 0 \,\bigr\}
\ \subseteq\ \skel(G_0^m),
\]
which proves the claim.

\subsection{Proof of Theorem~\ref{thm:complexity_fixed_pi}}
\label{app:proof_complexity_fixed_pi}

We prove Theorem~\ref{thm:complexity_fixed_pi} by decomposing the runtime across stages.

Fix an ordering $\pi$ and consider one pass of Algorithm~\ref{alg:scope} (Stages~0--2).
Let $d_m$ be the maximum degree of the screened graph encoded by the Stage~0 mask $\widehat M$
and let $\widehat M_\pi:=P_\pi \widehat M P_\pi^\top$ be the permuted mask.
We bound the runtime by summing the stagewise costs.

\paragraph{Stage 0 (precision estimation).}
When Stage~0 is implemented via DRO-calibrated graphical lasso, the cost has two components:
\begin{itemize}
    \item $B$ bootstrap replicates, costing $O(Bnp^2)$ in total to form the sample covariance matrices $\{\widehat{\Sigma}^{(b)}\}_{b=1}^B$.
    \item One graphical lasso solve with $T_{\mathrm{GL}}$ outer iterations, costing $O(T_{\mathrm{GL}}p^3)$ in a dense worst-case analysis,
    since the dominant subroutines reduce to dense matrix factorizations; see \citep{friedman2008glasso,hsieh2013bigquic,hsieh2014quic}.
\end{itemize}
Therefore,
\[
T_0 = O\!\left(Bnp^2 + T_{\mathrm{GL}}p^3\right).
\]

\paragraph{Stage 1 (masked IC(0)).}
We show that masked IC(0) runs in $O(p\,d_m^2)$ time under the degree bound $d_m$.
Write $\widehat{L}_\pi=\mathrm{IC0}(\widehat\Omega_\pi;\widehat M_\pi)$ for Stage~1 output.
For each row index $k\in[p]$, define the active predecessor set
\[
S_k := \{j<k:\; (\widehat M_\pi)_{kj}=1\}.
\]
By definition of $d_m$ as a maximum degree bound for the screened graph, we have $|S_k|\le d_m$ for all $k$.

Masked IC(0) computes, for each $k$, the entries $\{(\widehat{L}_\pi)_{kj}: j\in S_k\}$ and the diagonal $(\widehat{L}_\pi)_{kk}$
using only indices in $S_k$. Concretely, for each fixed pair $j\in S_k$,
the IC(0) update for $(\widehat{L}_\pi)_{kj}$ is of the generic form
\[
(\widehat{L}_\pi)_{kj}
\;=\;
\frac{1}{(\widehat{L}_\pi)_{jj}}
\Bigl(
(\widehat\Omega_\pi)_{kj}
-\sum_{\ell\in S_k\cap S_j} (\widehat{L}_\pi)_{k\ell}(\widehat{L}_\pi)_{j\ell}
\Bigr),
\]
and the diagonal update has the form
\[
(\widehat{L}_\pi)_{kk}^2
\;=\;
(\widehat\Omega_\pi)_{kk}
-\sum_{\ell\in S_k} (\widehat{L}_\pi)_{k\ell}^2,
\]
with the breakdown condition corresponding to $(\widehat{L}_\pi)_{kk}^2\le 0$.

We now count arithmetic operations. For each $k$:
\begin{itemize}
\item Computing $(\widehat{L}_\pi)_{kk}$ requires a sum over $t\in S_k$ and thus costs $O(|S_k|)\le O(d_m)$ operations.
\item For each $j\in S_k$, computing $(\widehat{L}_\pi)_{kj}$ requires summing over $S_k\cap S_j$.
Since $S_k\cap S_j \subseteq S_k$, this costs $O(|S_k\cap S_j|)\le O(|S_k|)\le O(d_m)$ operations.
There are $|S_k|\le d_m$ such indices $j$.
\end{itemize}
Hence the total cost at row $k$ is
\[
O(d_m) + \sum_{j\in S_k} O(d_m)
\;=\;
O(d_m) + O(|S_k|\,d_m)
\;=\;
O(d_m^2).
\]
Summing over $k=1,\dots,p$ yields
\[
T_1 = \sum_{k=1}^p O(d_m^2) = O(p\,d_m^2).
\]

\paragraph{Stage 2 (\textsc{RefitTest}).}
Stage~2 performs $p$ local regressions. For each node $k$, the candidate predictor set is the parent-candidate set
implied by the support of the Stage~1 factor, hence its size is at most the screened degree bound:
\[
s_k := |\widehat{\pa}_\pi(k)| \le d_m.
\]
Consider the regression of $X_k$ on $X_{\widehat{\pa}_\pi(k)}$ with $s_k$ predictors.
Forming the Gram matrix and cross-product costs
\[
X^\top X:\ O(n s_k^2),
\qquad
X^\top y:\ O(n s_k).
\]
Solving the resulting $s_k\times s_k$ linear system by a generic dense method costs $O(s_k^3)$.
Under the stated regime $d_m<n$, we have $s_k\le d_m<n$, so $s_k^3 \le s_k^2 n$ and the solve is dominated by $O(n s_k^2)$.
Computing residuals and $t$-statistics adds at most $O(n s_k + s_k^2)$ and is also dominated by $O(n s_k^2)$.
Therefore, the total cost per node is $O(n s_k^2)\le O(n d_m^2)$ for each $k$. Summing over $k=1,\dots,p$ gives
\[
T_2 = O(p\,n\,d_m^2).
\]

Combining the stagewise bounds,
\[
T(\pi)=T_0+T_1+T_2
=
O\!\left(Bnp^2 + T_{\mathrm{GL}}p^3 + p\,d_m^2 + p\,n\,d_m^2\right).
\]
Since $n\ge 1$, the term $p\,d_m^2$ is absorbed by $p\,n\,d_m^2$, yielding
\[
T(\pi)= O\!\left(B\,n\,p^2 + T_{\mathrm{GL}}\,p^3 + p\,n\,d_m^{\,2}\right),
\]
which proves the theorem.

\section{Numerical Experiments}
\label{app:experiments}

\paragraph{Implementation details.}
Within \textsc{SCOPE}, we set $\alpha=0.01$ for both Stage~0 selection of the graphical lasso regularization parameter $\lambda$ and Stage~2 refit-and-test step. For Stage~0, the precision matrix $\widehat{\Omega}$ is estimated by graphical lasso using the SQUIC solver~\citep{bollhofer2019squic}, and $\lambda$ is selected by the Robust Selection rule via the \texttt{robust-selection} Python package~\citep{cisnerosvelarde2020droglasso,robustselection2020pypi}. If SQUIC fails (e.g., due to an internal assertion when the returned solution is not positive definite), we skip the corresponding trial. Unless otherwise stated, all remaining hyperparameters and solver settings are kept at their default values, as in the respective references.

\paragraph{Evaluation.}
For methods that output a DAG, we convert the estimated DAG to a CPDAG using the \texttt{causaldag} Python package~\citep{squires2018causaldag}.
In rare cases where the estimated DAG becomes excessively dense, this CPDAG conversion can be prohibitively slow; if the conversion exceeds 60 seconds, we terminate the conversion and use the DAG skeleton as a CPDAG proxy.
Results are averaged over 30 replicates for settings with $p<10^4$ and over 10 replicates for the largest-scale setting with $p=10^4$.

\paragraph{Synthetic linear SEM generation.}
Synthetic data is generated from a linear SEM~\eqref{eq:sem_model} with independent errors.
To match the sparsity regime considered throughout the paper, we sample bounded in-degree DAGs by letting each node select at most $d$ parents among the earlier nodes in a random ordering.
For graphs with $p>1000$, we generate DAGs in a blockwise manner using blocks of size at most $1000$.
For each selected edge, the coefficient is drawn uniformly from $\pm[0.6,0.8]$.
Error variances are sampled independently from $[0.8,1.0]$.
To assess robustness beyond Gaussianity, the noise terms are drawn from one of the following distributions: standard normal, Student-$t$ with $10$ degrees of freedom, or uniform.
In all cases, the noise is centered to have mean zero and then rescaled so that its variance matches the sampled value in $[0.8,1.0]$.

We summarize below the software used for each baseline and the corresponding user-specified hyperparameters, with all unspecified options left at their package defaults.

\begin{itemize}
    \item \textbf{PC}~\citep{kalisch2007pc}.
    
    Implementation: \texttt{pcalg::pc} (R) via \texttt{rpy2}~\citep{kalisch2012pcalg}.
    
    The algorithm uses the Gaussian Fisher-$z$ CI test with $\alpha=0.01$.
    We set the maximum conditioning-set size to \texttt{m.max}$=3$ and use the original skeleton routine without the stable variant to keep the computational cost practical at our problem sizes.

    \item \textbf{GES}~\citep{chickering2002ges}.
    
    Implementation: \texttt{pcalg::ges} (R) with \texttt{GaussL0penObsScore} via \texttt{rpy2}~\citep{kalisch2012pcalg}.
    
    We use the standard $\ell_0$-penalized Gaussian score with penalty $\lambda = 0.5\log(n)$ (i.e., \texttt{lambda\_mult}$=0.5$), and otherwise keep defaults.

    \item \textbf{GSP}~\citep{solus2021consistency}.
    
    Implementation: \texttt{causaldag} (Python)~\citep{squires2018causaldag}.
    
    We run GSP with a partial-correlation CI tester using $\alpha=0.01$.
    In the \texttt{causaldag} interface, we do not provide \texttt{fixed\_adjacencies} or \texttt{fixed\_gaps} (so both remain \texttt{None}), and we keep the remaining options at their defaults.

    \item \textbf{NOTEARS}~\citep{zheng2018notears}.
    
    Implementation: \texttt{gCastle} \texttt{Notears} (Python)~\citep{zhang2021gcastle}.
    
    We use \texttt{lambda1}$=0.1$, \texttt{loss\_type}=\texttt{"l2"}, \texttt{max\_iter}$=20$ (gCastle default), and otherwise use defaults. If the estimated graph is not acyclic, we remove cycles by edge pruning, following the post-processing recommended in the NOTEARS paper (e.g., iteratively deleting the weakest edges until a DAG is obtained).

    \item \textbf{GOLEM}~\citep{ng2020golem}.
    
    Implementation: \texttt{gCastle} \texttt{GOLEM} (Python)~\citep{zhang2021gcastle}.
    
    We use \texttt{lambda\_1}$=0.1$, \texttt{lambda\_2}$=0.1$, \texttt{equal\_variances}=\texttt{False}, \texttt{num\_iter}$=1000$ (gCastle default), and otherwise use defaults. If the estimated graph is not acyclic, we apply the same edge-pruning post-processing as for NOTEARS, following the procedure recommended in the GOLEM paper.

    \item \textbf{FROSTY}~\citep{bang2023frosty} and \textbf{RFD}~\citep{squires2020rfd}.
    
    Implementation: Python implementations provided by the authors.
    
    Both methods take a precision estimate as input.
    We obtain $\widehat{\Omega}$ by graphical lasso with $\lambda$ selected by the Robust Selection rule at $\alpha=0.01$, following the tuning protocol in \citet{bang2023frosty}.
    We solve the graphical lasso using the SQUIC implementation~\citep{bollhofer2019squic}, and compute $\lambda$ using the \texttt{robust-selection} Python package~\citep{cisnerosvelarde2020droglasso,robustselection2020pypi}.
\end{itemize}

\subsection{Comparison with Baselines in low dimensional settings}
\label{app:exp_comparison}

\begin{figure*}[t]
    \centering
    \includegraphics[width=0.98\linewidth]{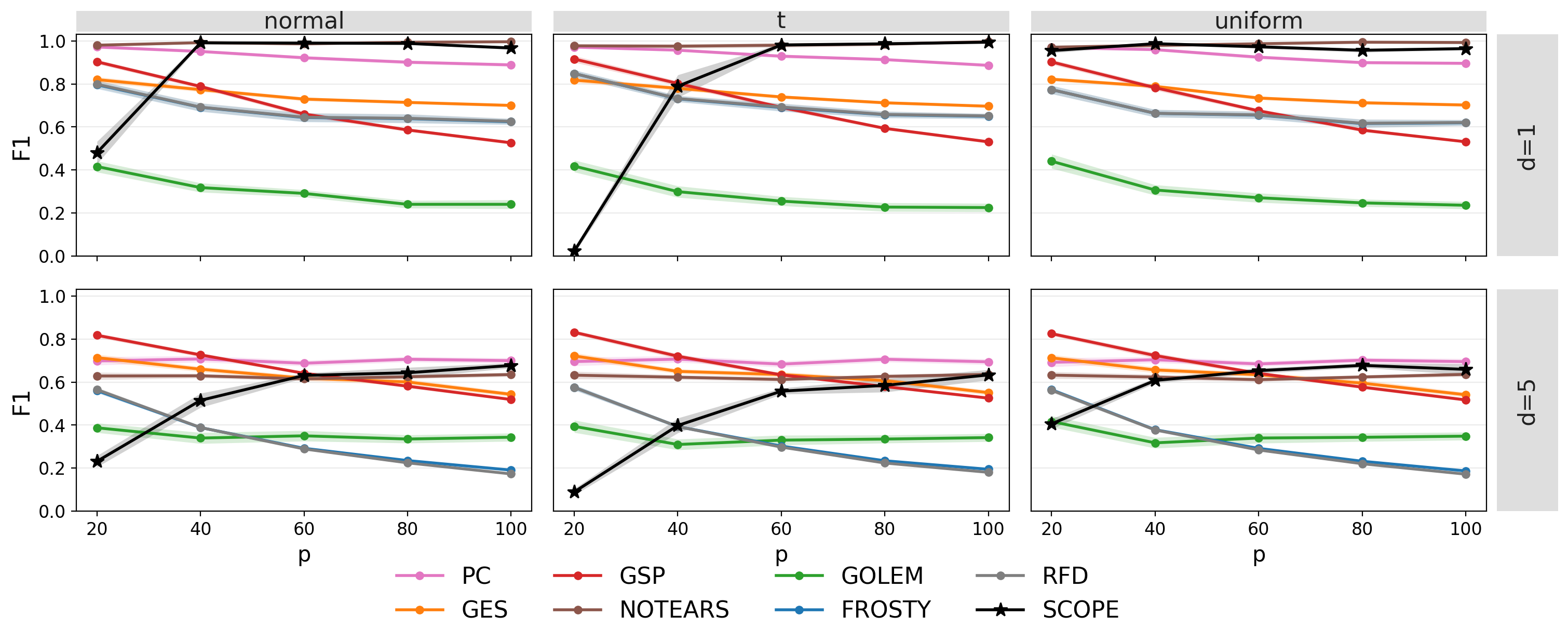}
    \caption{CPDAG-skeleton $F_1$. Shaded bands represent $\pm$1 standard error around the mean across 30 replicates.}
    \label{fig:metrics_vs_p:f1}
\end{figure*}

\begin{figure*}[!t]
    \centering
    \includegraphics[width=0.98\linewidth]{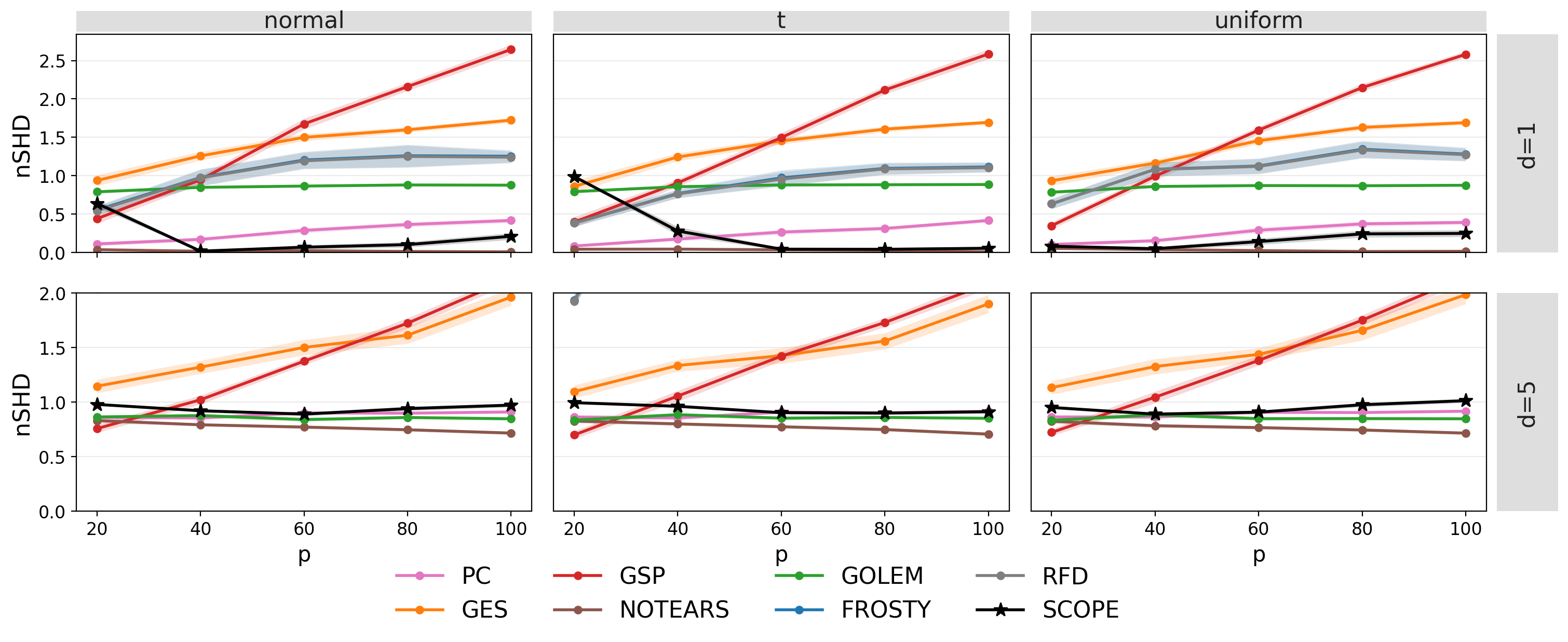}
    \caption{nSHD for CPDAG recovery. Shaded bands represent $\pm$1 standard error around the mean across 30 replicates.}
    \label{fig:metrics_vs_p:nshd}
\end{figure*}

We compare \textsc{SCOPE} under the AMD heuristic ordering with structure learning baselines on synthetic linear SEMs, varying
$p\in\{20,40,60,80,100\}$ with $n=20p$ and bounded indegree $d\in\{1,5\}$.
Figures~\ref{fig:metrics_vs_p:f1}--\ref{fig:metrics_vs_p:nshd} report CPDAG-skeleton $F_1$ and nSHD for CPDAG recovery under three noise distributions (normal, $t$, and uniform).
Figure~\ref{fig:accuracy_runtime}, presented earlier in the main text, shows only the Gaussian-case results for $p=100$ and $d=5$, which form a subset of the results considered in Figure~\ref{fig:metrics_vs_p:f1}.
Across all distributions, \textsc{SCOPE} maintains strong accuracy as $p$ increases, indicating that its performance
is not tied to a particular noise model.

In the zero-fill-friendly regime ($d=1$), each node has at most one parent, so moralization introduces no co-parent links and the moralized graph coincides with the DAG skeleton. Moreover, this graph is chordal and admits a perfect elimination ordering~\citep{rose1976elimination,george1981computer}.
Accordingly, sparse elimination can proceed with essentially zero fill, and \textsc{SCOPE} achieves near-exact
CPDAG recovery when precision estimation noise is modest.
When $d=5$, moralization-induced densification makes the ordering problem harder. However, the precision-support mask restricts admissible fill-in and suppresses elimination-induced artifacts, resulting in only moderate degradation in both $F_1$ and nSHD.

Relative to baselines, \textsc{SCOPE} matches or exceeds PC in skeleton recovery while avoiding the rapidly growing
burden of CI testing as $p$ increases.
Among continuous-optimization methods, NOTEARS can be accurate but is substantially slower, while GOLEM is faster
yet often loses accuracy.
Compared with greedy search methods (GES/GSP), \textsc{SCOPE} consistently attains smaller nSHD, indicating more reliable
CPDAG recovery in these regimes, and remains stable as $p$ and $d$ increase.
Finally, the sparse-Cholesky baselines (FROSTY and RFD) tend to suffer from false discoveries—especially in the
$d=5$ regime—leading to substantially larger nSHD.

We also observe small but systematic distribution-dependent differences.
Relative to normal and $t$ noise, uniform noise can yield slightly stronger performance at smaller $p$ for several methods, consistent with lighter-tailed sampling fluctuations under the fixed scaling $n=20p$.
In addition, in the nSHD plots under normal and uniform noise, nSHD exhibits a mild upward drift as $p$ increases.
A plausible explanation is that, as $n$ grows with $p$, the plug-in precision estimate $\widehat{\Omega}$ can become less sparse and induce a slightly denser screening mask $\widehat{M}$. This in turn expands the admissible region for fill and can introduce a small increase in CPDAG discrepancy.
Since these effects are largely mediated through Stage~0, their exact magnitude may vary with the choice of the plug-in precision estimator.

\subsection{High-Dimensional Performance}
\label{app:exp_scalability}

\begin{table*}[t]
\centering
\caption{Scalability results for high-dimensional synthetic SEMs under normal and uniform noise. Performance is evaluated on the estimated CPDAG.}
\label{tab:scalability_normal_uniform}

\begin{subtable}[t]{0.48\textwidth}
\centering
\subcaption{Normal noise ($d=1$)}
\begin{tabular}{c c c c}
\hline
$n$ & Method & $F_1$ & SHD \\
\hline
$0.5p$ & SCOPE  & \textbf{0.965} (0.006) & \textbf{0.163} (0.078) \\
$0.5p$ & FROSTY & 0.627 (0.010) & 1.193 (0.050) \\
$p$    & SCOPE  & \textbf{0.911} (0.005) & \textbf{0.345} (0.109) \\
$p$    & FROSTY & 0.626 (0.008) & 1.196 (0.042) \\
\hline
\end{tabular}
\end{subtable}
\hfill
\begin{subtable}[t]{0.48\textwidth}
\centering
\subcaption{Normal noise ($d=5$)}
\begin{tabular}{c c c c}
\hline
$n$ & Method & $F_1$ & SHD \\
\hline
$0.5p$ & SCOPE  & \textbf{0.580} (0.204) & \textbf{1.166} (0.074) \\
$0.5p$ & FROSTY & 0.022 (0.002) & 88.658 (12.876) \\
$p$    & SCOPE  & \textbf{0.562} (0.021) & \textbf{1.864} (0.109) \\
$p$    & FROSTY & 0.021 (0.000) & 93.024 (1.093) \\
\hline
\end{tabular}
\end{subtable}

\vspace{0.8em}

\begin{subtable}[t]{0.48\textwidth}
\centering
\subcaption{Uniform noise ($d=1$)}
\begin{tabular}{c c c c}
\hline
$n$ & Method & $F_1$ & SHD \\
\hline
$0.5p$ & SCOPE  & \textbf{0.948} (0.003) & \textbf{0.268} (0.062) \\
$0.5p$ & FROSTY & 0.622 (0.011) & 1.218 (0.056) \\
$p$    & SCOPE  & \textbf{0.903} (0.007) & \textbf{0.400} (0.075) \\
$p$    & FROSTY & 0.624 (0.011) & 1.207 (0.056) \\
\hline
\end{tabular}
\end{subtable}
\hfill
\begin{subtable}[t]{0.48\textwidth}
\centering
\subcaption{Uniform noise ($d=5$)}
\begin{tabular}{c c c c}
\hline
$n$ & Method & $F_1$ & SHD \\
\hline
$0.5p$ & SCOPE  & \textbf{0.642} (0.020) & \textbf{1.241} (0.059) \\
$0.5p$ & FROSTY & 0.022 (0.000) & 85.750 (0.638) \\
$p$    & SCOPE  & \textbf{0.497} (0.175) & \textbf{1.835} (0.302) \\
$p$    & FROSTY & 0.020 (0.002) & 97.246 (10.738) \\
\hline
\end{tabular}
\end{subtable}

\end{table*}

Table~\ref{tab:scalability_normal_uniform} reports additional $p=10^4$ scalability results under normal and uniform noise.
Results under normal and uniform noise align with those under $t$ noise in Table~\ref{tab:scalability}.
In the zero-fill-friendly regime ($d=1$), \textsc{SCOPE} under the AMD heuristic ordering achieves high $F_1$ with small nSHD across both $n=0.5p$ and $n=p$.
When $d=5$ and moralization densifies the screened structure, \textsc{SCOPE} remains markedly more stable than FROSTY, whose $F_1$ collapses and nSHD becomes extremely large.
We also observe that increasing $n$ does not necessarily improve accuracy, reflecting the plug-in nature of Stage~0 through the induced screening mask.
Overall, \textsc{SCOPE} is robust across noise distributions and continues to yield usable CPDAG estimates at large scale.

\section{Breast Cancer Data Analysis}
\label{app:real}

\paragraph{Data description.}
We illustrate the performance of our method for analyzing a real mRNA expression dataset from \cite{Koh2019}. See data files on authors' public GitHub repository: \url{https://github.com/Hiromikwl/DataCodes_iOmicsPASS}.

The transcriptomic data consists of mRNA expression profiles derived from invasive ductal breast carcinoma samples from The Cancer Genome Atlas (TCGA) project \citep{TheCancerGenomeAtlasNetwork2012}. The samples are stratified into four intrinsic phenotypic subgroups-Luminal A, Luminal B, HER2E, and Basal-like-determined by the mRNA-based PAM50 signature \citep{Parker2009}.

The data were originally acquired via the Genomics Data Commons (GDC) data portal and processed using GDC pipelines aligned to the GRCh38 reference genome, and the original ENSEMBL identifiers were mapped and converted to HGNC gene symbols \citep{Koh2019}. The mRNA data were mapped to a transcription factor (TF) regulatory network consisting of 2,486 TFs and 14,796 target genes. For broader pathway enrichment analysis, the study utilized a background set of 17,250 genes derived from ConsensusPathDB \citep{Kamburov2011} and Gene Ontology biological processes \citep{Ashburner2000}.

\paragraph{Implementation details.}
As described in Appendix~\ref{app:sample}, we calibrate the graphical lasso regularization parameter $\lambda$ via Robust Selection (RobSel).
For the extremely high-dimensional TCGA mRNA dataset with $(n,p)=(599,16325)$, we follow prior work \citep{robustselection2020pypi,bang2023frosty} and set level $\alpha^{(0)}=0.99$ for Stage 0, as RobSel tends to select a conservative regularization parameter $\lambda$ in this regime.

In addition, since this high-dimensional setting already yields a relatively strong regularization level through the DRO calibration, we use $\widehat{\Omega}(\lambda)$ directly as the Stage~0 precision estimate, instead of the thresholded version $\widehat{\Omega}^{\mathrm{thr}}$ in \eqref{eq:Omega_thr}.
All other implementation details for \textsc{SCOPE} follow Appendix~\ref{app:experiments}.

The implementations of PC and GES are also identical to those described in Appendix~\ref{app:experiments}.

\begin{table}[t]
\centering
\begin{tabular}{l l c c c c c}
\toprule
& & & \multicolumn{2}{c}{One-step Paths} & \multicolumn{2}{c}{Two-step Paths} \\
\cmidrule(lr){4-5} \cmidrule(lr){6-7}
Method & Random partition & \# of directed edges & Fwd/Rev & Ratio & Fwd/Rev & Ratio \\
\midrule
PC & 1 & 62,422 & 0.028/0.024 & 1.167 & 0.051/0.045 & 1.133 \\
PC  & 2 & 62,515  & 0.026/0.024 & 1.083 & 0.050/0.047 & 1.064 \\
PC  & 3 & 62,708  & 0.027/0.025 & 1.080 & 0.051/0.046 & 1.109 \\
GES & 1 & 308,533 & 0.028/0.024 & 1.167 & 0.056/0.045 & 1.244 \\
GES & 2 & 308,707 & 0.028/0.022 & 1.273 & 0.055/0.045 & 1.222 \\
GES & 3 & 307,769 & 0.029/0.022 & 1.318 & 0.056/0.045 & 1.244 \\
\midrule
\textsc{SCOPE} &  & 271,066 & 0.078/0.045 & \textbf{1.733} & 0.108/0.061 & \textbf{1.770} \\
\bottomrule
\end{tabular}
\caption{Proportions of biologically validated TF$\to$target paths recovered in the estimated CPDAGs for additional random partitions. \textsc{SCOPE} is evaluated on the full dataset and is therefore unchanged across partitions.}
\label{tab:tcga_tfdb_hops_compact_random_partitionings}
\end{table}

\paragraph{Results on additional random partitions.}
Table~\ref{tab:tcga_tfdb_hops_compact_random_partitionings} reports the PC and GES results for additional random partitions of the features; random partition 1 matches the result in Table~\ref{tab:tcga_tfdb_hops_compact}.

Across the additional random partitions, the same qualitative conclusion continues to hold. Although the split-based baselines show small partition-dependent fluctuations in both validations (one-step ratio ranges: PC, $1.080$--$1.167$; GES, $1.167$--$1.318$; two-step ratio ranges: PC, $1.064$--$1.133$; GES, $1.222$--$1.244$), \textsc{SCOPE} remains clearly separated from both methods, with substantially larger forward-to-reverse ratios in both one-step and two-step validations ($1.733$ and $1.770$, respectively). This qualitative gap is also consistent with the overall fraction of externally validated associations (regardless of direction): \textsc{SCOPE} attains $0.292$, whereas PC and GES remain in the ranges $0.147$--$0.149$ and $0.150$--$0.153$, respectively.

\end{document}